\useunder{\uline}{\ul}{}
\newcommand\redsout{\bgroup\markoverwith{\textcolor{red}{\rule[0.5ex]{2pt}{0.8pt}}}\ULon}
\newcommand{\AppendixLetterLabel}[1]{%
  \label{#1}
  \begingroup
    \edef\@currentlabel{\Alph{section}}
    \label{#1@letter}
  \endgroup
}
\newcommand{\appLetter}[1]{\ref{#1@letter}}
\newcommand{\bdm}{\begin{displaymath}}
\newcommand{\edm}{\end{displaymath}}
\newcommand{\inner}[2]{\left\langle {#1}, {#2} \right\rangle}
\newcommand{\half}{\frac{1}{2}}
\renewcommand{\(}{\left(}
\renewcommand{\)}{\right)}
\newcommand{\N}{{\mathbb N}}
\newcommand{\Znn}{{\mathbb{Z}_{\geq 0}}}
\newcommand{\Rf}{{\mathbb R}}
\newcommand{\Rnn}{{\mathbb{R}_{\geq 0}}}
\newcommand{\supp}{\operatorname{supp}}
\newcommand{\define}{\, := \,}
\newcommand{\udots}{
  \mathinner {\mkern 1mu\raise 1pt \vbox {\kern 7pt \hbox {.}}\mkern 2mu
  \raise 4pt \hbox {.}\mkern 2mu\raise 7pt \hbox {.}\mkern 1mu}}
\newcommand{\f}{\bm{f}} 
\newcommand{\bg}{\bm{g}}
\newcommand{\bk}{\bm{k}}
\newcommand{\bw}{\bm{w}}
\newcommand{\bxi}{\bm{{\xi}}}
\newcommand{\bphi}{\bm{{\phi}}}
\newcommand{\bpsi}{\bm{{\psi}}}
\newcommand{\bPhi}{\bm{{\Phi}}}
\newcommand{\ndim}{n}
\newcommand{\B}{{\mathcal B}}
\newcommand{\transp}{{\scriptscriptstyle{\mathsf{T}}}}
\newcommand{\bone}{{\boldsymbol 1}}
\DeclareMathOperator{\hl}{hl}
\DeclareMathOperator{\nat}{nat}
\DeclareMathOperator{\diagop}{diag}
\newcommand{\jmax}{{j_\mathrm{max}}}
\newcommand{\pmax}{{p_\mathrm{max}}}
\newcommand{\nmax}{{n_\mathrm{max}}}
\newcommand{\kk}{\kappa}
\newcommand{\Lap}{\mathcal{L}}
\newcommand{\Lrw}{L_\mathrm{rw}}
\newcommand{\Lsym}{L_\mathrm{sym}}
\newcommand{\W}{\mathcal{W}}
\def\ksim{\>\> \text{\stackunder[0pt]{$\sim$}{$\scriptscriptstyle\kk$}} \>\>}
\def\bk{{B_\kk}}
\def\Bk{{B_\kk}}
\newtheorem{theorem}{Theorem}
\newtheorem{remark}{Remark}
\journal{}
\begin{document}
\begin{frontmatter}

\title{Multiscale Hodge Scattering Networks for Data Analysis}
\author{Naoki Saito\corref{cor1}}
\ead{saito@math.ucdavis.edu}

\author{Stefan C. Schonsheck}
\ead{sschonsh@gmail.com}

\author{Eugene Shvarts}
\ead{eugene.shvarts@gmail.com}

\cortext[cor1]{Corresponding author}

\affiliation{organization={Department of Mathematics, University of California, Davis},
            addressline={One Shields Avenue}, 
            city={Davis},
            postcode={95616}, 
            state={CA},
            country={USA}}

\begin{abstract}

We propose new scattering networks for signals measured on simplicial complexes, which we call \emph{Multiscale Hodge Scattering Networks} (MHSNs). 
Our construction builds on multiscale basis dictionaries on simplicial complexes---namely, the $\kappa$-GHWT and $\kappa$-HGLET---which we recently developed for simplices of dimension $\kappa \in \mathbb{N}$ in a given simplicial complex by generalizing the node-based Generalized Haar--Walsh Transform (GHWT) and Hierarchical Graph Laplacian Eigen Transform (HGLET). 
Both the $\kappa$-GHWT and the $\kappa$-HGLET form redundant sets (i.e., dictionaries) of multiscale basis vectors and the corresponding expansion coefficients of a given signal. 
Our MHSNs adopt a layered structure analogous to a convolutional neural network (CNN), cascading the moments of the modulus of the dictionary coefficients.
The resulting features are invariant to reordering of the simplices (i.e., node permutation of the underlying graphs).
Importantly, the use of multiscale basis dictionaries in our MHSNs admits a natural pooling operation---akin to local pooling in CNNs---that can be performed either locally or per scale.
Such pooling operations are more difficult to define in traditional scattering networks based on Morlet wavelets and in geometric scattering networks based on Diffusion Wavelets. 
As a result, our approach extracts a rich set of descriptive yet robust features that can be combined with simple machine learning models (e.g., logistic regression or support vector machines) to achieve high-accuracy classification with far fewer trainable parameters than most modern graph neural networks require.
Finally, we demonstrate the effectiveness of MHSNs on three distinct problem types: signal classification, domain (i.e., graph/simplex) classification, and molecular dynamics prediction.

\end{abstract}

\begin{keyword}
Scattering transform \sep simplicial complexes \sep multiscale graph basis dictionaries \sep Hodge Laplacians \sep Haar-Walsh wavelet packets \sep graph classification \sep signal classification and regression
\end{keyword}

\end{frontmatter}

\section{Introduction}

\emph{Scattering Transforms} were introduced by Mallat in \cite{MALLAT-SCAT} as a method for feature extraction from signals and images. The resulting features are quasi-invariant to translations, stable to certain deformations of the input signals \cite{MALLAT-SCAT, nicola2023stability}, and preserve high-frequency information from the input, making them well-suited for a wide range of data classification tasks, such as texture image classification \cite{BRUNA-MALLAT, CHAK-SAITO-MWSN}. In addition, their computational architecture closely resembles a convolutional neural networks (CNNs), which allows for fast, GPU-friendly computation. In fact, these networks are often thought of as a type of CNN, with predetermined wavelet filter banks as their convolution filters and a pointwise modulus operation as their activation function. A key advantage of these networks over traditional CNNs is that since the filter banks do not need to be learned from input data, they are much less data-hungry. Additionally, they are more interpretable since each channel in the hidden representation is a deterministic cascade of wavelet transform convolutions with nonlinear activation and averaging operators.

More recently, Gao et al.\ introduced an analogous network architecture for node-signals on undirected graphs~\cite{gao2019geometric}, which they termed ``Geometric Scattering (Networks).'' In this context, invariance to node permutation takes the place of translation quasi-\hspace{0pt}invariance in the original scattering transform. This is achieved in a manner similar to PointNet \cite{qi2017pointnet}, by aggregating node features---via either sum pooling or max pooling---into a single value for each channel. The resulting feature extractor is permutation invariant, stable to graph deformations (see also \cite{GAMA-BRUNA-RIBEIRO, zou2020graph}), and can be paired with a simple learning model (typically logistic regression or support vector machine) to achieve near state-of-the-art (SotA) classification results on many datasets, often with far fewer training examples than CNN-based approaches require.  As a result, these networks produce descriptive yet robust features that enable high-accuracy classification with small training sets and a minimal number of training parameters.

In this article, we extend this line of research to signals defined on arbitrarily high-dimensional simplicial structures---edges, triangles, pyramids, and their $\kk$-dimensional analogues. Our methods differ from previous work in two key ways. First, earlier scattering networks have been applied only to point-valued signals, whereas our construction generalizes naturally to higher-dimensional structures. Second, we employ the $\kk$-Hierarchical Graph Laplace Transform ($\kk$-HGLET)~\cite{IRION-SAITO-TSIPN, SAITO-SCHONSHECK-SHVARTS} and the $\kk$-Generalized Haar--Walsh Transform ($\kk$-GHWT)~\cite{IRION-SAITO-GHWT, SAITO-SCHONSHECK-SHVARTS} as the wavelet filter banks in our transforms. By contrast, most prior work has relied on Morlet wavelets for images and Diffusion Wavelets~\cite{COIF-MAGG-DW} for graph-based signals. The bipartition tree induced by the multiscale transforms we proposed in \cite{SAITO-SCHONSHECK-SHVARTS} enables the construction of sparser approximations, which in turn lead to more efficient feature extraction and, consequently, more expressive networks. Moreover, the multiscale structure of these bases allows us to define local pooling operations in a straightforward manner, which can further enhance the performance of scattering networks in many applications. 

\subsection{Comparison with Related Works}
There has been growing interest in studying signals defined on edges, triangles, and higher-dimensional substructures within graph-structured data~\cite{carlsson2009topology, shuman2013emerging, giusti2016two, barbarossa2020topological, chen2021helmholtzian}. Applications in computer vision~\cite{lim2020hodge, roddenberry2022signal}, statistics~\cite{jiang2011statistical}, topological data analysis~\cite{chen2021helmholtzian, schonsheck2023spherical}, and network analysis~\cite{schaub2020random} have benefited from the study of high-dimensional simplicial complexes. Convolution-based simplicial neural networks have shown remarkable results in these domains~\cite{ebli2020simplicial}. We extend this line of research by defining scattering networks on such higher-dimensional domains.

Scattering networks~\cite{BRUNA-MALLAT} were initially introduced as a tool to explain the success of CNNs on many computer vision problems. These networks possess many of the invariance and equivariance properties that make CNNs desirable, but they do not contain any learnable filters; instead, they employ a cascade of wavelet convolutions and contractive nonlinearities. Later, Gao et~al.\ successfully generalized these networks to graphs~\cite{gao2019geometric}. Our work further generalizes these approaches.

The main ways our MHSNs differ from Geometric Scattering Networks (GSNs)~\cite{gao2019geometric} and Deep Haar Scattering Networks (DHSNs)~\cite{cheng2016deep} are: a1) MHSNs accept arbitrary simplicial complexes, whereas GSNs and DHSNs were designed for node signals only; and 2) GSNs and DHSNs are based on the Diffusion Wavelets of Coifman and Maggioni~\cite{COIF-MAGG-DW} and the Haar transform, respectively, and thus are not built on a hierarchical partitioning of the underlying graph. In contrast, MHSNs are constructed over the richer HGLET/GHWT dictionaries, which are more amenable to analysis since they consist of a collection of orthonormal bases (ONBs).

Hodgelets~\cite{roddenberry2022hodgelets} use a kernel defined in the spectral domain---similar to the spectral graph wavelet transform~\cite{HAMMOND-VANDERGHEYNST-GRIBONVAL}---to define another family of wavelet-like frames for signals on simplicial complexes. Topological Slepians~\cite{battiloro2023toposlepians} also form a localized basis dictionary on a given collection of $\kk$-simplices, but their construction is based on maximizing primal-domain concentration of vectors subject to a prescribed dual-domain (frequency-domain) support set. However, both Hodgelets and Topological Slepians are difficult to use for scattering-transform-type representations since they are not hierarchically arranged.

Recently, Chew et~al.\ introduced a method for \emph{windowed} scattering transforms that achieves local-pooling-like operations~\cite{chew2022geometric}. However, because the underlying topology of the graph or complex is non-Euclidean, it may be difficult to consistently define local windows across multiple graphs~\cite{yang2020pfcnn, schonsheck2022parallel}. It may be possible to use the partitioning scheme proposed in~\cite{SAITO-SCHONSHECK-SHVARTS} for these windows, but defining appropriate wavelet families for such a hybrid approach requires further study.

\section{Hodge Laplacians and Multiscale Basis Dictionaries}\label{sec:Dict}

In this section, we review the basic elements of Hodge theory needed to define the Hodge Laplacian on simplicial complexes, and we summarize the construction of multiscale basis functions on these spaces. For a more comprehensive introduction to Hodge theory, see~\cite{carlsson2009topology, giusti2016two, roddenberry2022signal}. For a detailed explanation of multiscale basis dictionaries, see~\cite{IRION-SAITO-TSIPN, SAITO-SCHONSHECK-SHVARTS}.

\subsection{Simplicial Complexes and Boundary Operators}
In this subsection, we review concepts from algebraic topology to formally define simplicial complexes and introduce several notions of adjacency between simplices. Given a vertex (or node) set $V = \{v_1, \ldots, v_n\}$, a \emph{$\kk$-simplex} $\sigma$ is a $(\kk+1)$-subset of $V$. 
A \emph{face} of $\sigma$ is a $\kk$-subset of $\sigma$, so $\sigma$ has exactly $\kk+1$ faces.
A \emph{co-face} of $\sigma$ is a $(\kk+1)$-simplex  of which $\sigma$ is a face. 

A \emph{simplicial complex} $C$ is a collection of simplices closed under taking subsets: if $\sigma \in C$ and $\alpha \subset \sigma$, then $\alpha \in C$.
In particular, if $\sigma \in C$, all of its faces are also in $C$.
Let $\kk_{\mathrm{max}}(C) \define \max\left\{\kk \mid \sigma \in C \ \text{is a $\kk$-simplex}\right\}$. For each $\kk \in \{0, 1,\ldots, \kk_{\mathrm{max}}\}$, let $C_\kk$ denote the set of $\kk$-simplices in $C$, and let $X_\kk$ be the space of real-valued functions on $C_\kk$.
When $\kk > \kk_{\mathrm{max}}$, we set $C_\kk = \emptyset$.
We also refer to $C$ as a \emph{$\kk$-complex} when $\kk_{\mathrm{max}}(C) = \kk$.
A \emph{$\kk$-region} of $C$ is any nonempty subset of $C_\kk$.

Let $C$ be a simplicial complex, and let $\sigma, \tau \in C_\kk$ for some $\kk > 0$.  
- If $\sigma$ and $\tau$ share a face, they are \emph{weakly adjacent}, denoted $\sigma \sim \tau$.  
- If $\sigma \sim \tau$ and they also share a co-face (their \emph{hull}), denoted $\hl(\sigma, \tau)$, and $\hl(\sigma, \tau) \in C$, then $\sigma$ and $\tau$ are \emph{strongly adjacent}, denoted $\sigma \simeq \tau$.  
- If $\sigma \sim \tau$ but $\sigma \not\simeq \tau$ in $C$, then $\sigma$ and $\tau$ are \emph{$\kk$-adjacent}, denoted $\sigma \ksim \tau$.

Figure \ref{fig:twotriangle} illustrates these adjacency relations in a toy $2$-complex.

\begin{figure}[ht] 
    \centering
    \includegraphics[width=.4\textwidth]{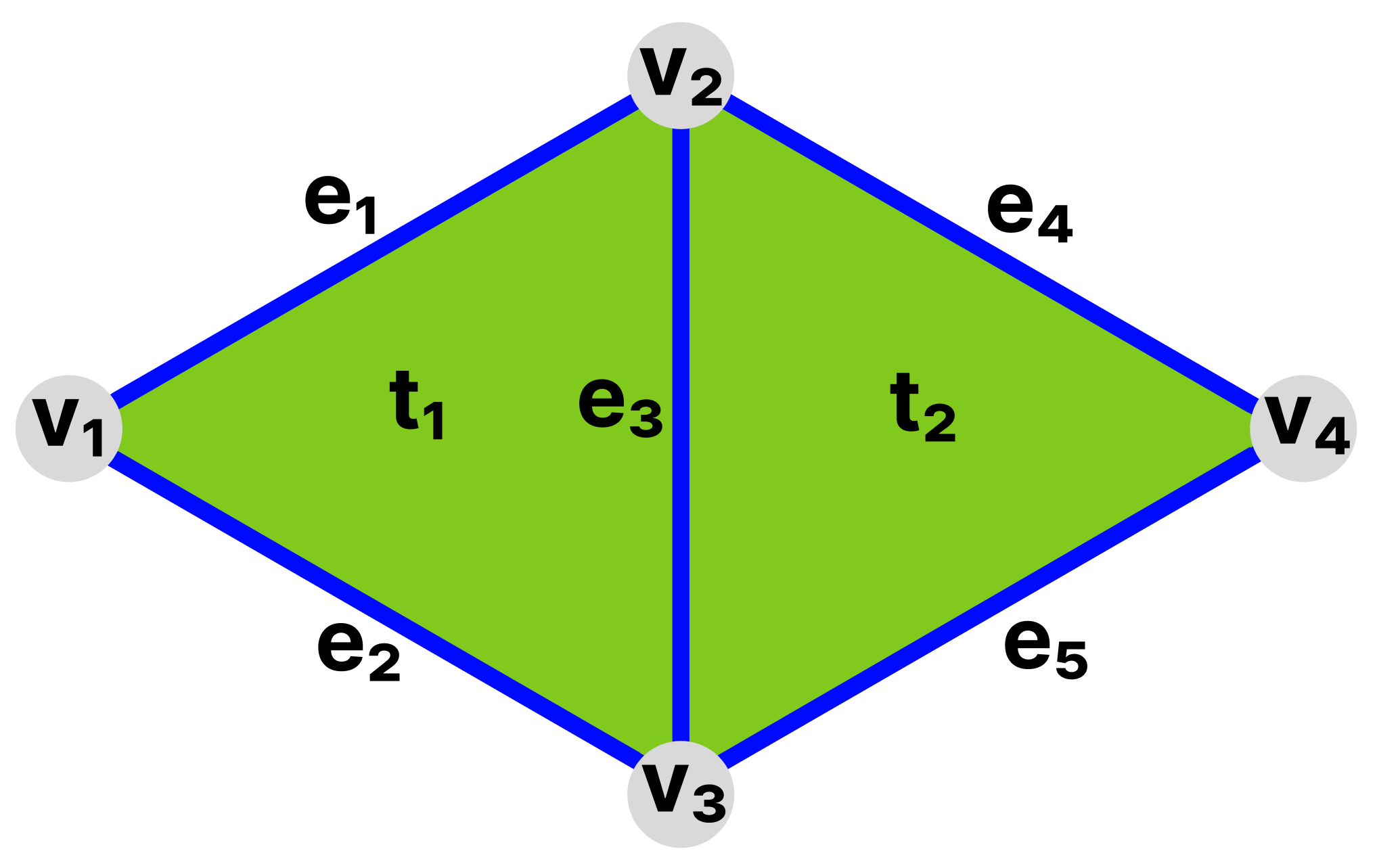}
    \caption{In this small $2$-complex $C$, $e_1 \sim e_4$ because they share the face $v_2$, and $e_1 \sim e_2$ because they share the face $v_1$. Further, $e_1 \simeq e_2$ because their hull $t_1 \in C$, but $e_1 \not\simeq e_4$, so $e_1 \>\> \text{\stackunder[0pt]{$\sim$}{$\scriptscriptstyle 1$}} \>\> e_4$. We have $t_1 \sim t_2$ because they share the face $e_3$, and also $t_1 \>\> \text{\stackunder[0pt]{$\sim$}{$\scriptscriptstyle 2$}}\>\> t_2$.}
    \label{fig:twotriangle}
\end{figure}

Suppose $\sigma = \{v_{i_1}, \ldots, v_{i_{\kk+1}}\}$, $i_1 < \cdots < i_{\kk+1}$, and let $\alpha \subset \sigma$ be a face.
Then $\sigma \setminus \alpha = \{v_{i_{\ell^*}}\}$ for some $\ell^* \in \{1,\ldots,\kk+1\}$.
The \textit{natural parity} of $\sigma$ with respect to $\alpha$ is defined as $\nat(\sigma, \alpha) \define (-1)^{\ell^* + 1}$.
An \emph{oriented} simplex $\sigma$ has an orientation $p_\sigma \in \{\pm 1\}$ indicating whether its parity with its faces agrees with ($+1$) or is opposite to ($-1$) its natural parity.
When $p_\sigma = +1$, we say $\sigma$ is in \emph{natural orientation}.
For example, a directed edge $e = (v_i,v_j)$ with $i<j$ is in natural orientation, while if $i>j$, then $p_e = -1$.
An oriented simplicial complex contains at most one orientation for any given simplex.

Given an oriented simplicial complex $C$, for each $\kk \in \{0, 1, \ldots, \kk_\mathrm{max} \}$,
the \emph{boundary operator} is the linear map $\bk : X_{\kk+1} \to X_\kk$, whose matrix entries for $\sigma \in C_{\kk+1}$ and $\alpha \in C_\kk$ are $[\Bk]_{\alpha\sigma} = p_\sigma \, p_\alpha \, \nat(\sigma, \alpha)$. 
The \emph{coboundary operator} for each $\kk$ is the adjoint $\bk^\transp : X_\kk \to X_{\kk+1}$.
The entries of $\bk$ encode \emph{relative orientation} between a simplex and its faces, and naturally define local signed averaging operators according to adjacency in the simplicial complex.

\subsection{Hodge Laplacian}
\label{sec:hodgelap}
The boundary operators introduced above act as \emph{discrete differential operators} that encode the structure of $\kk$-regions in a simplicial complex, and thus serve as building blocks for the spectral analysis of functions on those regions. 
For analyzing functions on $\kk$-simplices with $\kk>0$, we construct operators based on the \emph{Hodge Laplacian}, or \emph{$\kk$-Laplacian}. 
Following~\cite{lim2020hodge}, the \emph{combinatorial} $\kk$-Laplacian for $\kk$-simplices is defined as
\begin{equation*}
L_\kk \define B_{\kk-1}^\transp B_{\kk-1} + B_\kk B_\kk^\transp\,.
\end{equation*}

Various forms of weighting and normalization are possible, each with its own advantages and challenges, and these choices can also affect how one interprets the Fiedler vector of the resulting Hodge Laplacian; see~\cite[Chap.~4]{SHVARTS-PHD} for a detailed discussion.
In our numerical experiments, we adopt the \textit{symmetrically normalized, weighted} Hodge Laplacian defined in \cite{chen2021helmholtzian} as follows.
For each $\kk \in \{0,1,\ldots,\kk_{\mathrm{max}}\}$, let $D_\kk$ be a diagonal matrix whose entries assign positive weights to the $\kk$-simplices in $C_\kk$.
One choice, used in \cite{chen2021helmholtzian}, is to set each simplex's weight to its degree, defined recursively by $D_{\kk+1} = I$, $\diagop(D_\kk) = |B_\kk| \diagop(D_{\kk+1})$, and $\diagop(D_{\kk-1}) = |B_{\kk-1}| \diagop(D_{\kk})$,
$|M|$ denotes the matrix obtained by taking the absolute value of each entry of
$M$, i.e., $[\,|M|\,]_{\alpha\sigma} = \big|[M]_{\alpha\sigma}\big|$.
The \textit{normalized boundary matrix} is then $\B_\kk \define D_\kk^{-1/2} B_\kk D_{\kk+1}^{1/2}$.
The symmetrically normalized, weighted Hodge Laplacian is defined by
\begin{equation*}
    \Lap_\kk \define \B_{\kk-1}^\transp \B_{\kk-1} + \B_\kk \B_\kk^\transp\,.
\end{equation*}
Throughout the remainder of this article, when referring to a variant of the Hodge Laplacian computed on a $\kk$-region $R$ without specifying the normalization or weighting, we write $\Lap(R)$. 
When $R=C_\kk$, we abbreviate this to $\Lap \define \Lap(C_\kk)$.

\subsection{The \texorpdfstring{$\kk$}{k}-HGLET}
The \emph{$\kk$-HGLET} is a generalization of the \emph{Hierarchical Graph Laplacian Eigen Transform} (HGLET)~\cite{IRION-SAITO-HGLETS} from functions on the nodes of a graph to functions on the $\kk$-simplices of a given simplicial complex~\cite{SAITO-SCHONSHECK-SHVARTS}. The HGLET itself can be viewed as a generalization of the Hierarchical Block Discrete Cosine Transform (HBDCT), which is obtained by creating a hierarchical bipartition of the signal domain and computing the DCT of the local signal supported on each subdomain. For the reader's convenience, we summarize the key points of the hierarchical bipartition tree of an input graph~$G$ and the original HGLET for node signals in Appendices~\appLetter{app:HBT} and~\appLetter{app:HGLET-GHWT}, respectively.

Let $\{\bphi^p_{k,l}\}$ denote the set of basis vectors in the $\kk$-HGLET dictionary, where $p$ is the level of the partition (with $p=0$ being the root), $k$ indexes the partition within the level, and $l$ indexes the elements within each partition in order of increasing frequency. Let $C^p_k$ be the $\kk$-region corresponding to the support of partition~$k$ at level~$p$, and let $n^p_k \define |C^p_k|$. Thus $C^0_0 = C_\kk$ and $n^0_0 = |C_\kk| \, =: \, n$.

To compute the transform, we first obtain the complete set of eigenvectors $\{\bphi^0_{0,l}\}_{l=0:n^0_0-1}$ of $\Lap = \Lap(C^0_0)$, ordered by nondecreasing eigenvalues. We then bipartition $C^0_0$ into two disjoint $\kk$-regions $C^1_0$ and $C^1_1$ using the Fiedler vector of $\Lap$. We note that: 1) other bipartitioning methods may be used; and 2) bipartitioning with the Fiedler vector in the $\kk$-region setting requires additional steps compared to the graph setting, due to its more intricate behavior (see~\cite{SAITO-SCHONSHECK-SHVARTS} for details).

The same procedure is applied recursively to $C^1_0$ and $C^1_1$, yielding eigenvectors $\{\bphi^1_{0,l}\}_{l=0:n^1_0-1}$ and $\{\bphi^1_{1,l}\}_{l=0:n^1_1-1}$. Since $n^1_0 + n^1_1 = n^0_0 = n$ and the supports of $\{\bphi^1_{0,l}\}$ and $\{\bphi^1_{1,l}\}$ are disjoint, these two sets are orthogonal. Their union $\{\bphi^1_{0,l}\} \cup \{\bphi^1_{1,l}\}$ forms an ONB for vectors in $X_\kk$. Continuing this process down the hierarchical bipartition tree produces an ONB at each level. If the tree terminates with regions containing single $\kk$-simplices, the final level coincides with the standard basis of $\Rf^n$.

Each level of the dictionary contains an ONB whose vectors have support roughly half the size of those at the previous level. There are approximately $(1.5)^n$ possible ONBs obtainable by selecting different covering sets of regions from the hierarchical bipartition tree. The computational cost of generating the entire dictionary is $\mathcal{O}(n^3)$. See~\cite{SAITO-SCHONSHECK-SHVARTS} for the complete algorithm to construct the $\kk$-HGLET on a given $C_\kk$ and for further implementation details.

\subsection{The \texorpdfstring{$\kk$}{k}-GHWT}
The \emph{$\kk$-GHWT} is a generalization of the \emph{Generalized Haar--Walsh Transform} (GHWT)~\cite{IRION-SAITO-GHWT} from functions on the nodes of a graph to functions on the $\kk$-simplices of a given simplicial complex~\cite{SAITO-SCHONSHECK-SHVARTS}. The GHWT itself can be viewed as a generalization of the Haar--Walsh wavelet packets~\cite[Sec.~8.1]{MALLAT-BOOK3}. As with the $\kk$-HGLET, the construction begins by generating a hierarchical bipartition tree of $C_\kk$.

The transform is computed in a bottom-up manner, starting from the finest level $p = \pmax$, where each region contains a single $\kk$-simplex represented by its indicator vector (i.e., one of the standard basis vectors of $\Rf^n$). These vectors are called \emph{scaling vectors} and are denoted $\{\bpsi^{\pmax}_{k,0}\}_{k=0:n-1}$. At the next level, $p = \pmax - 1$, we first assign a constant scaling vector supported on each region. For any region with two children in the partition tree, we form a \emph{Haar vector} by subtracting the scaling vector of the higher-index child from that of the lower-index child. This yields an orthonormal basis (ONB) $\{\bpsi^{\pmax-1}_{k,l}\}_{k=0:K-1,\, l=0:l(k)-1}$, where $K$ is the number of $\kk$-regions at level $\pmax-1$ and $l(k) \in \{1,2\}$, with each vector supported on at most two simplices.

At level $p = \pmax - 2$, we again compute scaling and Haar vectors as before. In addition, for any region containing three or more elements, we construct \emph{Walsh vectors} by adding and subtracting the Haar vectors of its child regions. This process is then applied recursively up the tree, producing the full dictionary. A complete description of the algorithm is given in~\cite{IRION-SAITO-GHWT} for the $\kk=0$ case and in~\cite{SAITO-SCHONSHECK-SHVARTS} for the general case $\kk > 0$.

Like the $\kk$-HGLET, each level of the $\kk$-GHWT dictionary forms an ONB, and the basis vectors at each level have support roughly half the size of those at the previous level. Moreover, the $\kk$-GHWT basis vectors share the same supports as the corresponding $\kk$-HGLET basis vectors, i.e., $\supp(\bphi^p_{k,l}) = \supp(\bpsi^p_{k,l})$ for all $p, k, l$. However, the computational cost of constructing the $\kk$-GHWT is only $\mathcal{O}(n \log n)$, in contrast to the $\mathcal{O}(n^3)$ cost of the $\kk$-HGLET.

\section{Multiscale Hodge Scattering Transform}

To discuss our version of the scattering transform on a given simplicial complex, it is convenient to define the \emph{scale parameter} $j \define \pmax - p$, where $p$ is the partition level introduced in the previous section. Thus, $j = 0$ corresponds to the finest partition $p = \pmax$, where each partition is a singleton whose basis vector is one of the standard basis vectors of $\Rf^n$, and $j = \jmax \define \pmax$ corresponds to the coarsest scale at the root level $p = 0$ (i.e., no partition) of the input simplicial complex.
Let the $\kk$-HGLET or $\kk$-GHWT dictionary vectors be arranged as
$\bPhi^{J} \define \left\{ \Phi^j \right\}_{j=0}^J$  where each $\Phi^j \in \Rf^{n \times n}$ is an ONB at scale $j$ whose \emph{rows} are the basis vectors. 
In general, the hierarchical bipartition scheme yields $\jmax + 1 \approx \log_2 n + 1$ distinct levels, but in practice features extracted at large $j$ are often
less descriptive~\cite{gao2019geometric}. Hence, we typically use the $J+1$
finest levels with $0 \leq J \leq \jmax$.

Let $\f \in X_\kk$ and write $[\f]_i$ for its value on simplex $i$.
For $q \in \N$, define \\
$\left|\f\right|^q \define \left(\left|[\f]_1\right|^q, \ldots, \left|[\f]_n\right|^q\right)^\transp \in \Rf^n$.
We compute the $q$th moments (up to some maximum $Q \in \N$) of the $0$th- and
and $1$st-order scattering coefficients:
\begin{equation}
\label{eq:s01}
\!\!\!\!\!  S^0(q) \f \define \frac{1}{n} \sum_{i=1}^n [\f]_i^q, \,\quad  S^1(q,j) \f \define \frac{1}{n} \sum_{i=1}^n \left[ \left| \Phi^j \f \right|^q \right]_i, \,\, 0 \leq j \leq J; 1 \leq q  \leq Q\,,
\end{equation}
and the $2$nd-order scattering coefficients:
\begin{equation}
\label{eq:s2}
S^2 \left( q,j,j'\right) \f \define  \frac{1}{n} \sum_{i=1}^n \left[ \left| \Phi^{j'} \left| \Phi^j \f \right| \right|^q \right]_i, 
\,\, 0 \leq j < j' \leq J, \,\, 1 \leq q  \leq Q\,.
\end{equation}
Higher-order scattering coefficients are defined similarly:
\begin{equation}
\label{eq:sm}
S^m \left( q,j^{(1)},\ldots,j^{(m)}\right) \f \define \frac{1}{n} \sum_{i=1}^n \left[ \left|\Phi^{j^{(m)}} \left| \Phi^{j^{(m-1)}} \left| \cdots \left| \Phi^{j^{(1)}} \f \right| \cdots \right| \right| \right|^q \right]_i,
\end{equation}
where $0 \leq j^{(1)} < \cdots < j^{(m)} \leq J$.
Because the number of features grow combinatorially with the order $m$,
it is uncommon to use orders beyond $m=2$ or $3$. Moreover, as our numerical
experiments indicate, higher moments ($q>4$) are typically less useful in practice due to instability~\cite{BRUNA-MALLAT, gao2019geometric, chew2022geometric}.

For clarity, define the layer operator
$\hat{S}^1(j,q) \f \define \left| \Phi^j \f \right|^q$,
which maps $\f \in \Rf^n$ to $\Rf_{\geq 0}^\ndim$, 
so that the $m$th-order feature in Eq.~\eqref{eq:sm} can be written succinctly as
\begin{equation}
\label{eq:nest}
S^m \left(q,j^{(1)},\ldots,j^{(m)} \right) \f 
= S^0(q) \hat{S}^1\left(1,j^{(m)}\right) \cdots \hat{S}^1 \left(1,j^{(1)} \right) \f
\end{equation}
This mirrors the architecture of a convolutional neural network with fixed
weights, as studied in~\cite{MALLAT-SCAT}. Unlike conventional feed-forward
networks, in which only the input to the final layer (i.e., the classification
head)---the features extracted at the end of the cascade of linear and nonlinear
transformations---is used for classification, 
we concatenate the features from every layer of the cascade for downstream analysis.
We refer to the process of extracting these features as the \emph{Multiscale Hodge Scattering Transform} (MHST).
When these deterministic features are used in conjunction with a separate
learnable model, e.g., support vector machine (SVM) or logistic regression model (LRM), we call the system a \emph{Multiscale Hodge Scattering Network} (MHSN). 

Figure~\ref{fig:MHSN-GP} illustrates our MHSN ($q=1$) with global pooling on a toy $2$-complex with nine elements.
The leftmost column shows $2$-GHWT bases at successive scales, with colors indicating the sign of the basis vector components (yellow = positive; purple = negative; green = 0). For $m=0$, $\hat{S}^0\f \in \Rf^9_{\geq 0}$ is computed by taking inner products of the input signal with the finest scale ($j=0$) $\delta$-function basis; the resulting coefficients are then averaged over all simplices---this averaging over the whole domain is what we call \emph{global pooling}---to produce a scalar $S^0 \f \in \Rnn$.
For $m=1$, $\hat{S}^1(j)\f \in \Rf^9_{\geq 0}$ is obtained by taking the inner products of $\hat{S}^0\f$ (identical to $\f$) with the scale-$j$ GHWT basis vectors, then applying pointwise modulus (absolute value). Global pooling yields the scattering coefficient $S^1(j)\f \in \Rnn$. Repeating this for each $j=0, 1, \ldots, J$, produces $S^1\f \in \Rf^{J+1}_{\geq 0}$.
For $m=2$, the same steps are applied to each $\hat{S}^1(j)\f$ with the scale-$j' (> j)$ GHWT basis vectors (illustrated for $j'=j+1$), iterating over all $j, j'$ with $0 \leq j < j' \leq J$, yielding $S^2\f \in \Rf^{J(J+1)/2}_{\geq 0}$.
This is repeated until $m$ reaches the user-specified highest order $M$ (in the figure, $M=4$ and $J=4$).
\begin{figure}[h]
 \includegraphics[width=\textwidth]{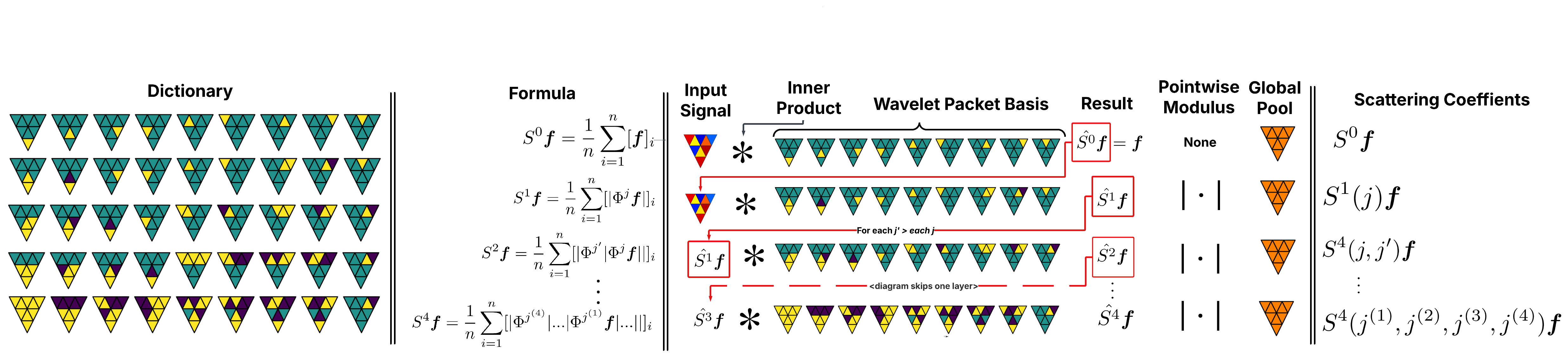}
 \caption{Schematic of the Multiscale Hodge Scattering Network for a $2$-complex with nine-elements, illustrating global pooling}
   \label{fig:MHSN-GP}
\end{figure}

\subsection{Local Pooling}\label{sec:local}

In general, correcting all moments up to $Q$ and of orders up to $M$ yields
$Q \sum_{m=0}^{M} \binom{J+1}{m}$ features under global pooling, since
the sums in \eqref{eq:s01}--\eqref{eq:sm} average over all $n$ simplices.
When permutation invariance is not required (i.e., all signals are defined on
a fixed complex with a known node ordering), we may omit these sums (\emph{no pooling}), resulting in $Q n \sum_{m=0}^{M} \binom{J+1}{m}$ features per signal.

Between these two extremes, we can perform \emph{local pooling} by averaging
over regions $C^j_k$ at a chosen scale $j$, instead of averaging over the entire
domain or not averaging at all. This produces a tuple of features (one per
region) rather than a single value as in the global pooling case. This idea is
related to windowed scattering transforms~\cite{chew2022geometric}, but here we
leverage the multiscale decomposition provided by the hierarchical partition
tree, avoiding an additional window-size parameter. In local pooling, we replace the normalization factor in the average by the region size $n^j_k$ (rather than
$n$), as formalized in Eq.~\eqref{eq:localstc}. The total number of features
is 
$Q \left( 1 + \sum_{m=0}^{M-1} \sum_{j=m}^J \binom{j}{m} K^j \right) = Q \left( 1 + \sum_{j=0}^J \left(\sum_{m=0}^{M-1} \binom{j}{m} \right) K^j \right)$,\\
where $K^j$ is the number of regions at scale $j$. Setting $K^j \equiv 1$
recovers the global-pooling count, whereas $K^j \equiv n$ recovers the
no-pooling count.

We denote these transforms as $S^m_j$, where the subscript $j$ indicates
the \emph{pooling scale} (the scale at which the final averaging is performed).
Thus, $S^m_\jmax = S^m$ corresponds to global pooling as in \cite{BRUNA-MALLAT, gao2019geometric}; $S^m_0$ denotes the transform without any pooling ($K^0=n$);
and $S^m_2$ denotes local pooling with regions at scale $j=2$.
In general,
\begin{equation}
\label{eq:localstc}
    S^m_j \left(q,j^{(1)},\ldots,j^{(m)}\right) \f \define \Biggl\{ \frac{1}{n^j_k} \sum_{i \in C^j_k} \left[ \left|\Phi^{j^{(m)}} \left| \Phi^{j^{(m-1)}} \left| \cdots \left| \Phi^{j^{(1)}} \f \right| \cdots \right| \right| \right|^q \right]_i \Biggr\} _{k=0}^{K^j-1} ,
\end{equation}
where $n^j_k = | C^j_k |$ and $C^j_k$ is the $k$th region at scale $j$ in the partition of the input $\kk$-simplices (i.e., partition level $p=\jmax-j$).
%

Figure~\ref{fig:MHSN-LP} illustrates the local-pooling version of our MHSN ($q=1$) using the same example as in Fig.~\ref{fig:MHSN-GP}.
The key difference between this variation and the global-pooling version (i.e., averaging over all simplices) is that averages are taken over a set of disjoint local regions that together cover the entire set of simplices. As a result, instead of producing a single feature at scale $j$, we obtain one feature for each local pool at that scale. The orange triangles indicate the $2$-regions in each local pool.
We construct these local pools by selecting one of the partitions of the $2$-simplices generated by the hierarchical bipartition tree used in computing the GHWT dictionary. See Appendix~\appLetter{app:HBT} and \cite{SAITO-SCHONSHECK-SHVARTS} for a detailed discussion of the partitioning scheme. In this figure, we use the partition defined by the support of the basis at scale $j=J-1=3$.
\begin{figure}[h]
 \includegraphics[width=\textwidth]{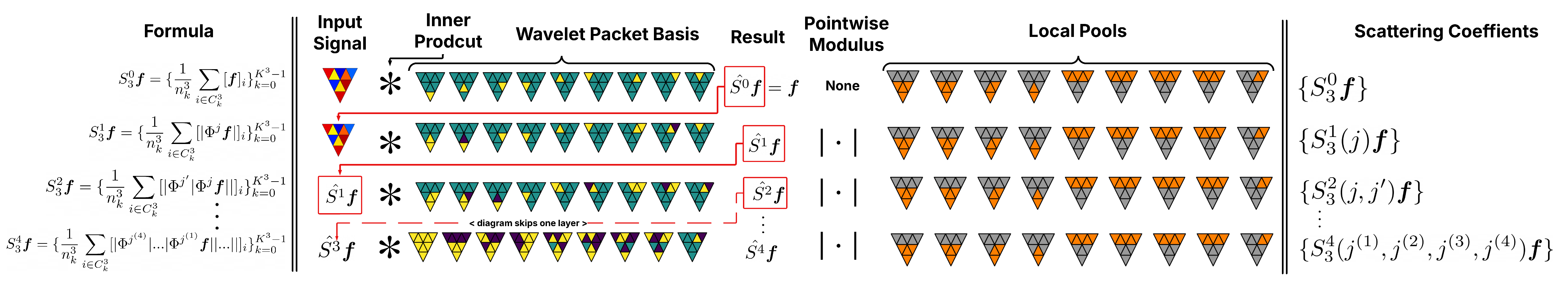}
 \caption{Schematic of the Multiscale Hodge Scattering Network for the same $2$-complex example as in Fig.~\ref{fig:MHSN-GP}, illustrating \emph{local} pooling}
 \label{fig:MHSN-LP}
\end{figure}

\section{Theoretical Analysis of Multiscale Hodge Scattering Transforms}
In this section, we establish the continuity, invariance, and equivalence
properties of the MHSNs that make them well suited for signal and domain
classification tasks. For notational convenience, we let $S \equiv S^1$ and
restrict our formal proofs to the first-order transform. However, since $S^m \f$
can be viewed as applying $S^1$ to the transform $S_0^{m-1}\f$, followed by
the pooling operation described in Eq.~\eqref{eq:nest}, all of the proofs
extend directly to the general $S^m$ case. 

First, we establish bounds on the coefficients generated by the multiscale basis.
Let $\W$ denote the matrix obtained by stacking a multiscale basis into a
$(J+1) n \times n$ matrix, where the $j$th $n\times n$ block corresponds to
the ONB at scale $j$.
We define a weighted inner product and the associated norm for signals
in $X_\kk$ by
\begin{equation*}
    \langle \f, \boldsymbol{g} \rangle_w \define \sum_{\sigma \in C_\kk} [\f]_\sigma [\bg]_\sigma [\bw]_\sigma \quad \text{and} \quad \|\f\|_w \define \sqrt{\langle \f, \f \rangle_w} .
\end{equation*}

In practice, we often take $\bw \equiv \bone$, but in some applications more
specialized weightings---such as those incorporating the volume of each face---may be advantageous.

\begin{remark}[Tight Frames]
  Let $\f \in X_\kk$, and $\| \cdot \|_w$ denote the $\ell^2$-norm with respect to
  the weight vector $\bw$. Then,
\begin{equation*}
    \|\f\|_w^2 \leq \|\W \f\|_w^2 = (J+1) \|\f\|^2_w\,.
\end{equation*}
\end{remark}

This follows immediately from the fact that $\W$ is the vertical concatenation of $J+1$ orthogonal matrices, and orthogonal transforms preserve the $\ell^2$-norm.
Although elementary, this property will play a key role in subsequent proofs.
We next show that the MHST is a non-expansive operator, enabling the use of
powerful tools from nonlinear contraction analysis~\cite{lohmiller1998contraction}, as illustrated in our numerical experiments.

\begin{theorem}[Non-expansive Operation]
  Let $S$ denote the MHST with global pooling, formed from the multiscale basis
  dictionary $\W$ as defined in \eqref{eq:s01}--\eqref{eq:sm}, with $q=1$,
  acting on the metric space $(X_\kk, \| \cdot \|_w)$.
  Then, for all $\f_1, \f_2 \in X_\kk$, 
\begin{equation*}
    \| S \f_1 - S \f_2 \|_w \leq \|\f_1 - \f_2\|_w\,.
\end{equation*}
Moreover, let $S_j$ denote the transforms with local pooling as described in Section~\ref{sec:local}. Then, for all $\ 0 \leq j \leq J$ and all $\f_1, \f_2 \in X_\kk$, 
\begin{equation*}
    \| S_j \f_1 - S_j \f_2 \|_w \leq \|\f_1 - \f_2\|_w
\end{equation*}
\end{theorem}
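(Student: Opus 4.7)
The strategy is to decompose $S$ into a composition of three elementary non-expansive building blocks---the isometry of each ONB $\Phi^j$, the reverse triangle inequality $||a|-|b|| \leq |a-b|$ applied componentwise, and the contractiveness of averaging via Cauchy--Schwarz---and to combine them with the tight-frame identity from the preceding Remark. By the recursive decomposition noted after Eq.~\eqref{eq:nest}, namely $S^m\f = S^1 \circ S_0^{m-1}\f$ followed by pooling, it suffices to prove the first-order case $S = S^1$; the higher-order bounds then follow because the composition of non-expansive maps is non-expansive.

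First I would fix a scale $j \in \{0,\ldots,J\}$ and establish the per-scale bound
\begin{equation*}
\bigl\| |\Phi^j \f_1| - |\Phi^j \f_2| \bigr\|_w \;\leq\; \bigl\| \Phi^j (\f_1 - \f_2) \bigr\|_w \;=\; \|\f_1 - \f_2\|_w,
\end{equation*}
where the inequality is the reverse triangle inequality applied componentwise inside the weighted $\ell^2$ norm and the equality uses the orthonormality of $\Phi^j$. Next, for the global-pooling coefficient $S^1(1,j)\f = \tfrac{1}{n}\sum_i [|\Phi^j\f|]_i$, Cauchy--Schwarz against the vector $\bone$ gives $|S^1(1,j)\f_1 - S^1(1,j)\f_2|^2 \leq \tfrac{1}{n} \bigl\| |\Phi^j\f_1| - |\Phi^j\f_2| \bigr\|_w^2 \leq \tfrac{1}{n}\|\f_1-\f_2\|_w^2$. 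Summing over $j=0,\ldots,J$ and invoking the tight-frame identity $\sum_j \|\Phi^j \bv\|_w^2 = \|\W \bv\|_w^2 = (J+1)\|\bv\|_w^2$ from the preceding Remark yields
\begin{equation*}
\|S\f_1 - S\f_2\|_w^2 \;\leq\; \tfrac{J+1}{n}\,\|\f_1-\f_2\|_w^2 \;\leq\; \|\f_1-\f_2\|_w^2,
\end{equation*}
where the last inequality uses $J+1 \leq \jmax + 1 \approx \log_2 n + 1 \leq n$, a consequence of the hierarchical bipartition tree construction.

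For the local-pooling variant $S_j$, I would replace the single global average by the tuple of per-region averages. Applying Cauchy--Schwarz independently on each region $C^j_k$ of size $n^j_k$ gives
\begin{equation*}
\sum_k \bigl|[S_j^1(1,j')\f_1 - S_j^1(1,j')\f_2]_k\bigr|^2 \;\leq\; \sum_k \tfrac{1}{n^j_k}\sum_{i \in C^j_k}\bigl([|\Phi^{j'}\f_1|]_i - [|\Phi^{j'}\f_2|]_i\bigr)^2 \;\leq\; \bigl\||\Phi^{j'}\f_1| - |\Phi^{j'}\f_2|\bigr\|_w^2,
\end{equation*}
where the final step uses the disjointness of the $C^j_k$ together with $n^j_k \geq 1$; summing over $j'$ and absorbing the tight-frame constant as before closes the argument.

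The main obstacle is the careful bookkeeping of normalization constants so that the final prefactor equals $1$: the tight-frame identity contributes a factor $J+1$ that must be absorbed either by the global-pooling factor $1/n$ (using the hierarchical depth bound $J+1 \leq n$) or, in the no-/local-pooling regimes, by an appropriate weight convention on the output space that builds in the frame redundancy. Once this accounting is made explicit, every remaining step reduces to a textbook inequality (reverse triangle, Cauchy--Schwarz, or ONB isometry), so the bulk of the proof is routine once the right normalization is fixed.
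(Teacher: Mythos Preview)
Your route is sound and in fact sharper than the paper's. The paper applies Cauchy--Schwarz at the level of individual inner products, bounding $\left|\inner{\f_1-\f_2}{\Phi^j(i,:)}\right| \leq \|\f_1-\f_2\|_w$ and then averaging to obtain $\left|S^1(1,j)\f_1-S^1(1,j)\f_2\right| \leq \|\f_1-\f_2\|_w$ for each fixed $j$; it then simply asserts that ``taking the $w$-norm over the entire collection of first-order features preserves the bound,'' leaving the aggregation over scales implicit. You instead apply Cauchy--Schwarz to the averaging operator and use the full ONB isometry $\|\Phi^j\bv\|_w=\|\bv\|_w$ (rather than only the unit-norm-row property), yielding the stronger per-scale estimate $\left|S^1(1,j)\f_1-S^1(1,j)\f_2\right|^2 \leq \tfrac{1}{n}\|\f_1-\f_2\|_w^2$, so the sum over $j$ is controlled by $\tfrac{J+1}{n}$ and the tree-depth inequality $J+1\leq\jmax+1\leq n$ (a bipartition tree on $n$ leaves has depth at most $n-1$, so your ``$\approx$'' can be made exact) finishes the global case cleanly. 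The normalization obstacle you flag for local and no pooling---that the region-wise factors $1/n^j_k$ no longer absorb the $J{+}1$ frame redundancy---is genuine, and the paper does not resolve it either: its local-pooling case is dispatched by the single sentence that the proof ``follows identically,'' so your explicit acknowledgment of the needed output-space weight convention is, if anything, an improvement in rigor rather than a defect relative to the paper.
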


\begin{proof} We first show that each layer of the MHST is non-expansive.
  Since the full transform is a cascade of such layers, with pointwise modulus
  operations between them, the entire transform will also be non-expansive.

  Fix $j$ and consider the first-order coefficient $S^1(1,j)\f$.
  For $\f_1, \f_2 \in X_\kk$,
\begin{align*}
  \left| S^1\left(1,j \right) \f_1 - S^1\left(1,j \right)\f_2 \right|
  &= \left|
  \frac{1}{n} \sum_{i=1}^n \left[ \left| \Phi^j \f_1 \right| \right]_i
  - \frac{1}{n} \sum_{i=1}^n \left[ \left| \Phi^j \f_2 \right| \right]_i
  \right| \\
  &\leq \frac{1}{n} \sum_{i=1}^n \left|
  \left[ \left| \Phi^j \f_1 \right| \right]_i - \left[ \left| \Phi^j \f_2 \right| \right]_i
  \right| \\
  &\leq \frac{1}{n} \sum_{i=1}^n \left|
  \left[ \Phi^j \f_1 \right]_i - \left[ \Phi^j \f_2 \right]_i
  \right| \\
  &= \frac{1}{n} \sum_{i=1}^n \left|
  \left[ \Phi^j (\f_1 - \f_2) \right]_i
  \right| \\
  &= \frac{1}{n} \sum_{i=1}^n \left|
  \inner{\f_1 - \f_2}{\Phi^j(i,:)}
  \right| \\
  &\leq \frac{1}{n} \sum_{i=1}^n \left\|\f_1 - \f_2 \right\|_w \left\|\Phi^j(i,:)\right\|_w \quad \text{(Cauchy--Schwarz)} \\
  &= \frac{1}{n} \sum_{i=1}^n \left\|\f_1 - \f_2 \right\|_w \\
  &= \| \f_1 - \f_2 \|_w\,,
\end{align*}
where we have used the fact that each $\Phi^j$ is an ONB w.r.t.\ $\|\cdot\|_w$,
so $\|\Phi^j(i,:)\|_w = 1$ for all $i$.
Since this inequality holds for each $j$, taking the $w$-norm over the entire
collection of first-order features $S^1 \f$ preserves the bound.
Moreover, higher‑order features are obtained by applying $S^1$ to the previous
order's output (followed by pooling), so the same argument applies inductively.
Thus
\begin{equation*}
  \| S \f_1 - S \f_2 \|_w \leq \|\f_1 - \f_2\|_w\,. 
\end{equation*}
The proof for the local-pooling transforms $S_j$ follows identically, with the global average replaced by the average over each local region.
\end{proof}

Next, we show that our networks are invariant under group operations that
preserve the weighted inner product $\langle \cdot, \cdot, \rangle_w$,
such as permutations of the $\kk$-simplices. In other words, relabeling the
indices of the elements of $C_\kk$ does not affect the output of the transform.
For example, given a signal $\f$ defined on the triangular faces of a simplex,
the indexing of the triangles does not affect the globally-pooled transform, and
permuting this indexing produces the corresponding permutation of the non-pooled
signal. This result is analogous to Theorem~3 in~\cite{chew2022geometric} and
Proposition~4.1 in~\cite{zou2020graph}, but it applies to any $\kk$, rather than
only to the node case ($\kk=0$). 

\begin{theorem}[Invariance and Equivariance] 
  Let $\mathcal{G}$ be a group of transformations on the elements of $C_\kk$
  (e.g., permutations of the $\kk$-simplices). For any $\bxi \in \mathcal{G}$,
  let $V_{\bxi} : X_{\kk} \to X_{\kk}$ denote the operator
  $V_{\bxi} \f = \f \circ \bxi$, i.e.,
  $(V_{\bxi} \f)(\sigma) = \f\(\bxi(\sigma)\)$, induced by $\bxi$. 
  Let $S^{[\bxi]}$ denote the analogous transform on $C^{[\bxi]}_\kk$,
  the permuted version of $C_\kk$. Then, for $\bxi \in \mathcal{G}$,
  $\f \in X_\kk$, and $0 \leq j \leq J$, 
\begin{equation*}
    S^{[\bxi]} V_{\bxi} \f = S \f\,, \qquad S_j^{[\bxi]} V_{\bxi} \f = S_j \f\,.
\end{equation*}
\end{theorem}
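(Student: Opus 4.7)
\emph{Plan.} The plan is to reduce the statement to the first-order transform $S^1$ — the nested representation \eqref{eq:nest} then propagates the result to higher orders by induction on $m$ — and to exploit the fact that the multiscale basis dictionary is an \emph{intrinsic} object: the bases $\Phi^j$ and the partition regions $C^j_k$ depend only on the Hodge Laplacian and the Fiedler-based bipartition tree, both of which commute with any relabeling $\bxi \in \mathcal G$. Once this dictionary equivariance is in hand, the global average over all simplices absorbs the relabeling to give invariance, and the average over each individual region absorbs it region-by-region to give the tuple-level equality in the local-pooling case.

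\emph{Dictionary equivariance.} First I would write $V_\bxi \f = P_\bxi \f$ for the permutation matrix $P_\bxi$ associated to $\bxi$, and note that the hypothesis on $\mathcal G$ makes $P_\bxi$ an isometry of $\langle \cdot, \cdot \rangle_w$. The boundary matrices on the relabeled complex are conjugates of the originals by the induced permutations on neighboring dimensions, so the weighted Hodge Laplacian obeys $\Lap^{[\bxi]} = P_\bxi \Lap P_\bxi^\transp$. Consequently the eigenpairs transform as $(\lambda_l, \bphi_l) \mapsto (\lambda_l, P_\bxi \bphi_l)$ and the Fiedler bipartition of $C_\kk^{[\bxi]}$ is the $\bxi$-image of the Fiedler bipartition of $C_\kk$. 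Iterating down the hierarchical tree would give $\bphi^{j,[\bxi]}_{k,l} = P_\bxi \bphi^j_{k,l}$ for every $(j,k,l)$ and identify $C^{j,[\bxi]}_k$ with the corresponding $\bxi$-image of $C^j_k$; the same identities hold for the $\kk$-GHWT since its Haar/Walsh recursion is driven entirely by the same bipartition tree.

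\emph{First-order identity and induction on $m$.} Using the isometry of $P_\bxi$ together with dictionary equivariance,
\begin{equation*}
    \langle \bphi^{j,[\bxi]}_{k,l}, V_\bxi \f \rangle_w = \langle P_\bxi \bphi^j_{k,l}, P_\bxi \f \rangle_w = \langle \bphi^j_{k,l}, \f \rangle_w.
\end{equation*}
Taking moduli and averaging these coefficients over all $(k,l)$ (i.e., the index $i$ in \eqref{eq:s01}) yields $S^{1,[\bxi]}(1,j) V_\bxi \f = S^1(1,j) \f$ for global pooling, while $S^0$ is trivially invariant. For local pooling, the sum is restricted to basis vectors supported on the $k$th region at scale $j$; since the region index $k$ is matched between the relabeled and original partition trees by dictionary equivariance, the same coefficient identity yields $S_j^{[\bxi]} V_\bxi \f = S_j \f$ componentwise. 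To promote this to $m \geq 2$ I would use \eqref{eq:nest}: the interior modulus layer $\hat S^1(1, j^{(r)})$ commutes with $V_\bxi$ (absolute value is pointwise and the linear layer is equivariant by the coefficient identity above), so $S^m$ is a cascade of equivariant maps terminated by one invariant or equivariant average.

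\emph{Main obstacle.} The delicate point is making dictionary equivariance fully rigorous in the non-generic case. When the spectrum of $\Lap$ is simple and the Fiedler vector has no ties across $\mathcal G$-orbits, every step of the recursion is unambiguous up to sign and the argument proceeds cleanly. When eigenvalues are repeated or Fiedler entries tie, however, the bipartitioning rule and eigenvector-ordering rule must themselves be equivariant under $\mathcal G$ — for instance by using only intrinsic spectral/algebraic data to resolve ties. Once a canonical tie-breaking rule of this kind is assumed (or one restricts to generic complexes), the remainder of the proof is a direct substitution using the identity above.
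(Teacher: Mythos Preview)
Your proposal is correct and follows essentially the same route as the paper: both arguments hinge on the conjugation identity $\Lap^{[\bxi]} = V_\bxi \Lap V_\bxi^{-1}$, deduce that the dictionary bases satisfy $\Phi^{j,[\bxi]} = V_\bxi \Phi^j V_\bxi^{-1}$, and then let the (global or regionwise) average absorb the outer permutation. Your write-up is in fact more careful than the paper's on two points the paper leaves implicit: the equivariance of the hierarchical bipartition tree (and hence of the regions $C^j_k$ used in local pooling), and the need for a $\mathcal G$-equivariant tie-breaking convention when eigenvalues or Fiedler entries are degenerate.
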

\begin{proof}
  As in the previous proof, it suffices to show that the property holds for
  an arbitrary layer; since the full transform is a cascade of such layers,
  the result then follows for the entire transform.

  Let $\Lap_{\bxi} = V_{\bxi} \circ \Lap \circ V_{\bxi}^{-1}$, where $\Lap$ is
  the $\kk$-Laplacian on $C_\kk$. 
  If $\bphi$ is an eigenvector of $\Lap$ with $\Lap \bphi = \lambda \bphi$,
  then
  \begin{equation*}
    \Lap_{\bxi} V_{\bxi} \bphi
    = V_{\bxi} \circ \Lap \circ V_{\bxi}^{-1} \circ V_{\bxi} \bphi
    = V_{\bxi} \circ \Lap \bphi
    = \lambda\, V_{\bxi} \bphi .
  \end{equation*}
  Thus $V_{\bxi} \bphi$ is an eigenvector of $\Lap_{\bxi}$ with the same eigenvalue $\lambda$.

  Since $\mathcal{G}$ preserves the weighted inner product on $\W$, it follows
  that $\Phi_{\bxi} \define V_{\bxi} \Phi$ is an ONB for $\W_{\bxi}$, where the
  atoms of $\Phi_{\bxi}$ are eigenvectors of $\Lap_{\bxi}$.
  In particular, $\Phi_{\bxi} = V_{\bxi} \Phi V_{\bxi^{-1}}$.

  Now, for arbitrary input $f \in X_\kk$,
\begin{equation*}
  S^{[\bxi]} V_{\bxi} \f
  = \frac{1}{n} \sum_{i=1}^n \left| [\Phi_{\bxi}  V_{\bxi} \f]_i \right|
  = \frac{1}{n} \sum_{i=1}^n \left| [ V_{\bxi} \Phi \f ]_i \right|
  = \frac{1}{n} \sum_{i=1}^n \left| [\Phi \f]_i \right|
  = S \f
\end{equation*}

The proof for the local‑pooling transforms $S_j$ is identical, with the global
average replaced by the average over each local region.
\end{proof}

\section{Signal Classification}

We first demonstrate the effectiveness of our MHSNs with the article category
classification problem using the \emph{Science News} database~\cite{SCIENCE-NEWS, IRION-SAITO-MLSP16}. We do not claim SotA results for this problem, but rather use it to illustrate the advantages of analyzing signals via higher-dimensional simplices. We also show how locally-pooled networks can shatter problems in which traditional globally-pooled scattering networks fail to differentiate between classes.
The Science News dataset contains 1042 scientific
news articles classified into eight fields: Anthropology, Astronomy;
Behavioral Sciences; Earth Sciences; Life Sciences; Math/CS; Medicine; Physics.
Each article is tagged with keywords from a pool of 1133 words selected by the database curators. We determine a simplicial complex from these keywords by computing their \texttt{word2vec}~\cite{rong2014word2vec} embeddings based on
Google's publicly available pre-trained model~\cite{google-word2vec-site}.
We generate a symmetric $k$-nearest neighbor graph of the embedded words
and then generate $\kk$-simplices of the graph. 
Therefore, a $\kk$-simplex in this keyword graph corresponds to $\kk$-face, which represents a combination of $\kk+1$ words.

Next, we define representations of each article as a signal in each $X_\kk$
as follows. First, for $\kk=0$ (i.e., a node-valued signal), we define the
signal $\f_0$ to be one on the nodes representing their keywords and zero
elsewhere. For $\kk \geq 1$ we define the signal $\f_{\kk}$ to be the simplex-wise
average of the $\f_0$ signal. That is,
\begin{equation*}
[\f_0]_i = \begin{cases}
1 &\text{if keyword $i$ occurs}\\
0 &\text{otherwise}
\end{cases} ;
\quad
[\f_\kk]_i = \frac{1}{\kk+1} \sum_{\substack{l \in V(\sigma_i) \\\sigma_i \in C_\kk}} [\f_0]_l ,
\end{equation*}
where $V(\sigma_i)$ represents the set of nodes forming the $i$th simplex $\sigma_i \in C_\kk$. Note that these signals are highly localized since the keywords are connected through a symmetrized $k$NN graph, and the higher-order signals are built from the adjacency of the resulting complex. To showcase the robustness of our approach, we report results using both $k=5$ and $k=10$ nearest neighbor graphs.          

Tables~\ref{tab:5-NN} and \ref{tab:10-NN} compare the performance of our proposed methods with the other simpler methods, i.e., the raw expansion coefficients of the signals relative to the standard ONBs (Delta; Fourier) or the dictionaries (Diffusion; HGLET; GHWT).
The parameters for the feature computations were set as $(J, M, Q)  = (5, 3, 4)$.
For each $\kk$, we performed the five-fold cross-validation, i.e., we randomly split these 1042 signals into 80\% training and 20\%
test sets and repeat the experiment 5 times with different train/test splits. 
In every case we use the $\ell^2$-regularized LRM provided by \texttt{scikit-learn}~\cite{scikit-learn} without any additional hyperparameter tuning to compute the classification.

Several observations are in order. First, the traditional, globally-pooled scattering networks mostly fail on this task regardless of the wavelet dictionary employed. Since the number of nonzero entries in each signal is similar and therefore the $\ell^1$-norms are also similar, global-pooling schemes fail to capture the keyword information (i.e., indices of nonzero entries) in a way that differentiates between the classes and consequently do not produce statistically significant results. The non-pooled features often provide the highest performance, which is not surprising since there are many more features and learnable parameters than the networks with pooling. However, the locally-pooled features almost always perform on par with the non-pooled features. For both the 5 and 10 nearest neighbor graphs, the best overall results are achieved by the $\kk$, which has the largest number of elements. Similarly, the 10-nearest neighbor graph performs better than the 5-nearest neighbor graphs at the cost of larger $n$.  

We also observe that the networks based on $\kk$-HGLET and $\kk$-GHWT generally outperform those based on Diffusion Wavelets. This is likely due to the highly localized and piecewise constant nature of the input signals, which are well-approximated by these dictionaries \cite{SAITO-SCHONSHECK-SHVARTS}. In the next section, where the signals are not localized, we do not observe this difference.

\begin{table*}[ht]
\centering
\resizebox{\textwidth}{!}{%
\begin{tabular}{c|c|cc|ccc|cccc|cccc}
\hline
Knn = 5    & $n$    & \multicolumn{1}{c|}{Delta} & Fourier & \multicolumn{3}{c|}{Diffusion}                                              & \multicolumn{4}{c|}{HGLET}                                                                       & \multicolumn{4}{c}{GHWT}                                                                         \\ \hline
           &      & \multicolumn{1}{c|}{Basis} & Basis   & \multicolumn{1}{c|}{Dict.} & \multicolumn{1}{c|}{GP} & NP         & \multicolumn{1}{c|}{Dict.} & \multicolumn{1}{c|}{GP} & \multicolumn{1}{c|}{LP} & NP              & \multicolumn{1}{c|}{Dict.} & \multicolumn{1}{c|}{GP} & \multicolumn{1}{c|}{LP} & NP              \\ \hline
$\kk$=0 & 1133 & 33.971                     & 33.971  & \uline{\textbf{86.603}}                     & 31.579                       & \uline{\textbf{86.603}} & 81.818                     & 31.579                  & \uline{\textbf{86.603}}         & \uline{\textbf{86.603}} & 80.861                     & 31.579                  & 85.646                  & 86.124          \\
$\kk$=1 & 3273 & 55.502                     & 78.947  & 85.646                     & 31.579                       & 85.646          & \textbf{86.124}            & 31.579                  & \textbf{86.124}         & 85.646          & 85.603                     & 31.579                  & 85.646                  & 85.646          \\
$\kk$=2 & 1294 & 55.502                     & 49.761  & 83.732                     & 31.579                       & 83.732          & 83.254                     & 31.579                  & \textbf{84.211}         & 83.732          & 83.732                     & 31.579                  & 83.254                  & 83.254          \\
$\kk$=3 & 227  & 31.579                     & 31.579  & \textbf{78.947}                     & 31.579                       & \textbf{78.947}          & 51.675                     & 31.579                  & 78.469                  & \textbf{78.947} & 51.196                     & 31.579                  & 78.469                  & \textbf{78.947} \\
$\kk$=4 & 16   & 31.579                     & 31.579  & \textbf{55.981}            & 31.100                         & \textbf{55.981} & 32.057                     & 31.100                    & \textbf{55.981}         & \textbf{55.981} & 32.057                     & 37.799                  & 55.502                  & 54.067          \\ \hline
\end{tabular}
}
\vspace{0.1em}
\caption{Article category classification accuracy for $5$-NN graph of the Science News dataset for different simplex degrees. Dict.\ implies that the SVM is trained solely on the dictionary coefficients while GP, LP, NP imply scattering networks with global, local, and no pooling, respectively. The best performer for each $\kk$ is indicated in bold while the underlined bold numbers are the best among all $\kk$'s.}
\label{tab:5-NN}
\end{table*}

\begin{table*}[ht]
\centering
\resizebox{\textwidth}{!}{%
\begin{tabular}{c|c|cc|ccc|cccc|cccc}
\hline
Knn = 10                        & $n$    & \multicolumn{1}{c|}{Delta} & Fourier & \multicolumn{3}{c|}{Diffusion}                                              & \multicolumn{4}{c|}{HGLET}                                                                       & \multicolumn{4}{c}{GHWT}                                                                         \\ \hline
\multicolumn{1}{c|}{}           &      & \multicolumn{1}{c|}{Basis} & Basis   & \multicolumn{1}{c|}{Dict.} & \multicolumn{1}{c|}{GP} & NP              & \multicolumn{1}{c|}{Dict.} & \multicolumn{1}{c|}{GP} & \multicolumn{1}{c|}{LP} & NP              & \multicolumn{1}{c|}{Dict.} & \multicolumn{1}{c|}{GP} & \multicolumn{1}{c|}{LP} & NP              \\ \hline
\multicolumn{1}{c|}{$\kk$=0} & 1133 & 35.238                     & 35.238  & 60.952                     & 32.381                       & 87.619          & 81.905                     & 32.381                  & \textbf{88.571}         & 87.619          & 80.952                     & 32.381                  & 87.619                  & 87.619          \\
\multicolumn{1}{c|}{$\kk$=1} & 6890 & 81.905                     & 81.905  & 86.667                     & 32.381                       & 86.667          & 85.714                     & 32.381                  & \uline{\textbf{89.524}}         & 86.667          & 85.714                     & 32.381                  & \uline{\textbf{89.524}}         & \uline{\textbf{89.524}} \\
\multicolumn{1}{c|}{$\kk$=2} & 7243 & 76.19                      & 76.19   & 86.667                     & 32.381                       & 88.571          & 85.714                     & 32.381                  & 88.571                  & 88.571          & 88.571                     & 32.381                  & \uline{\textbf{89.524}}         & 88.571          \\
\multicolumn{1}{c|}{$\kk$=3} & 4179 & 69.524                     & 69.524  & 74.286                     & 33.333                       & \textbf{86.667} & \textbf{86.667}            & 33.333                  & \textbf{86.667}         & \textbf{86.667} & \textbf{86.667}            & 33.333                  & \textbf{86.667}         & \textbf{86.667} \\
\multicolumn{1}{c|}{$\kk$=4} & 1740 & 45.714                     & 45.714  & 68.571                     & 35.238                       & \textbf{81.905} & 73.333                     & 35.238                  & \textbf{81.905}         & \textbf{81.905} & \textbf{81.905}            & 33.333                  & \textbf{81.905}         & \textbf{81.905} \\
\multicolumn{1}{c|}{$\kk$=5} & 560  & 33.333                     & 33.333  & 39.048                     & 34.286                       & \textbf{73.333} & 60.952                     & 33.333                  & \textbf{73.333}         & \textbf{73.333} & 60.952                     & 34.286                  & \textbf{73.333}         & \textbf{73.333} \\
\multicolumn{1}{c|}{$\kk$=6} & 98   & 32.381                     & 32.381  & 32.381                     & 34.286                       & \textbf{62.857} & 39.048                     & 35.238                  & \textbf{62.857}         & \textbf{62.857} & \textbf{62.857}            & 35.238                  & \textbf{62.857}         & 60.952          \\ \hline
\end{tabular}
}
\caption{Article category classification accuracy for $10$-NN graph of the Science News dataset for different simplex degrees. Dict.\ implies that the SVM is trained solely on the dictionary coefficients while GP, LP, NP imply scattering networks with global, local, and no pooling, respectively. The best performer for each $\kk$ is indicated in bold while the underlined bold numbers are the best among all $\kk$'s. }
\label{tab:10-NN}
\end{table*}

\section{Domain Classification}
\label{sec:domclass}
Another vital application of geometric scattering networks and graph neural networks (GNNs) is graph (and simplex) classification. Broadly speaking, this problem consists of predicting a label of a social or chemical graph based on a training set of similar graphs with different configurations (i.e., different numbers of nodes and edges). For example, in the COLLAB dataset~\cite{yanardag2015deep, patania2017shape}, each graph represents a network of coauthors of a scientific paper.
Since the size of the graphs varies greatly within these datasets, we employ only the global-pooling version of our MHSN, akin to the previous efforts reported in \cite{gao2019geometric,chew2022geometric}, which were based on geometric scattering methods.

We compute permutation-invariant input features based only on
topological information obtainable from the boundary matrices. Since many of the
graphs are too small to contain high-degree simplices, we only consider node and
edge-based features and transforms. Following the methodology developed in
\cite{gao2019geometric}, we set $(J, M, Q)  = (4, 2, 4)$.
For the node signals, we first compute
the eccentricity and clustering coefficient \cite[Sec.\ 1.2]{HARRIS-ETAL} of
each node.
For each node signal, the number of parameters (MHST coefficients) are
64 via the formula $Q \sum_{k=0}^{M} \binom{J+1}{k}$, hence 128 parameters
after concatenating them.
For the edge signals, we use the average of the eccentricities of the head and tail nodes of each edge and the number of non-zero off-diagonal terms in the combinatorial Hodge-Laplacian (each such term corresponds to a $1$-adjacent edge \cite[Sec.\ 4.1]{SHVARTS-PHD}).


For each domain classification problem we train three models:
1) using 128 node features; 2) using 128 edge features;
and 3) using 256 combined features.
We then employ a simple SVM with Gaussian radial basis functions to classify
the features. Moreover, we tune the hyperparameters controlling the strength
of the $\ell^2$-regularization and the kernel coefficients via the
cross-validation scheme presented in \cite{gao2019geometric} using the
same search space and validation indexes. 

We compare these results with those obtained by the geometric scattering network (with Diffusion Wavelets) using SVM (GS-SVM) as well as several popular GNN models including the graph convolution network (GCN) \cite{kipf2017semi}, universal graph transform (UGT) \cite{nguyen2022universal}, dynamic graph CNN (DGCNN) \cite{wang2018dynamic}, graph attention network (GAT) \cite{velickovic2018graph}, and graph feature network (GFN) \cite{chen2019powerful}. For each competing method, we reproduce the results in the significant figures reported in their original publications; we report to 2 decimal places for our methods. More information on the benchmark datasets can be found in \ref{sec:datasets}. We remark that, as of this article's writing, this collection of networks achieves SotA results on these datasets according to the Papers with Code Leaderboards \cite{paperwithcode}. Further details on these datasets and their associated classification problems are presented in ~\ref{sec:datasets} and the references therein. 

\begin{table*}[ht]
\centering
\resizebox{\textwidth}{!}{%
\begin{hide}
\begin{tabular}{c|ccc|ccc|cccccc}
\hline
\multicolumn{1}{c|}{\textbf{}} &
& \multicolumn{3}{c|}{HGLET+SVM}                                                 & \multicolumn{3}{c|}{GHWT+SVM}
& \multicolumn{1}{c|}{\multirow{2}{*}{GS-SVM}}
& \multicolumn{1}{c|}{\multirow{2}{*}{GCN}}
& \multicolumn{1}{c|}{\multirow{2}{*}{UGT}}
& \multicolumn{1}{c|}{\multirow{2}{*}{DGCNN}}
& \multicolumn{1}{c|}{\multirow{2}{*}{GAT}}
& \multirow{2}{*}{GFN} \\ \cline{1-6}

\multicolumn{1}{c|}{Graph \ Feature Type}
& \multicolumn{1}{c|}{Node} & \multicolumn{1}{c|}{Edge} & \multicolumn{1}{c|}{Combo} & \multicolumn{1}{c|}{Node} & \multicolumn{1}{c|}{Edge} & \multicolumn{1}{c|}{Combo} & \multicolumn{1}{c|}{} & \multicolumn{1}{c|}{} & \multicolumn{1}{c|}{}  & \\ \hline
COLLAB   & 70.84 & 78.34 & 80.39 & 68.94 & 78.34 & 78.34
& 79.94  & 79.00 & 77.84 & 73.76 & 75.80 & \textbf{81.50}    \\
DD       & 60.67 & 68.73 & 72.71 & 72.20 & 65.76 & 69.32
         & -     & -     & \textbf{80.23} & 79.37 & - & 79.37 \\
IMDB-B   & 72.70 & 70.60 & 73.10 & 62.50 & 61.60 & 61.90
         & 71.20 & 74.00 & \textbf{77.04} & 70.03 & 70.50 & 73.40 \\
IMDB-M   & 44.40 & 47.13 & 49.68 & 44.33 & 39.07 & 39.73
         & 48.73 & 51.90 & \textbf{53.60} & 47.83 & 47.8 & 51.80 \\
MUTAG    & 85.78 & 86.31 & 85.78 & 81.58 & 76.84 & 79.95
         & 83.50 & 85.60 & 80.23 & 79.37 & \textbf{89.40} & 85.83 \\
PROTEINS & 73.57 & 73.04 & 75.35 & 71.52 & 66.70 & 71.61
         & 74.11 & 76.00 & \textbf{78.53} & 75.54 & 74.70 & 76.46 \\
PTC      & 62.85 & 67.71 & 68.28 & 52.29 & 55.71 & 51.71
         & 63.94  & 64.20 & \textbf{69.63} & 58.59 & 66.70 & 66.60              \\ \hline
\end{tabular}
\end{hide}
\begin{tabular}{c|ccc|ccc|c|c|c|c|c|c}
\hline
\textbf{} & \multicolumn{3}{c|}{HGLET+SVM} & \multicolumn{3}{c|}{GHWT+SVM}
& \multirow{2}{*}{GS-SVM} & \multirow{2}{*}{GCN} & \multirow{2}{*}{UGT} &
\multirow{2}{*}{DGCNN} & \multirow{2}{*}{GAT} & \multirow{2}{*}{GFN} \\
\cline{2-7}
Graph    & Node & Edge & Combo & Node & Edge & Combo & & & & & & \\ \hline
COLLAB   & 70.84 & 78.34 & 80.39 & 68.94 & 78.34 & 78.34 & 79.94 & 79.00 & 77.84 & 73.76 & 75.80 & \textbf{81.50} \\
DD       & 60.67 & 68.73 & 72.71 & 72.20 & 65.76 & 69.32 & -     & -     & \textbf{80.23} & 79.37 & -     & 79.37 \\
IMDB-B   & 72.70 & 70.60 & 73.10 & 62.50 & 61.60 & 61.90 & 71.20 & 74.00 & \textbf{77.04} & 70.03 & 70.50 & 73.40 \\
IMDB-M   & 44.40 & 47.13 & 49.68 & 44.33 & 39.07 & 39.73 & 48.73 & 51.90 & \textbf{53.60} & 47.83 & 47.80 & 51.80 \\
MUTAG    & 85.78 & 86.31 & 85.78 & 81.58 & 76.84 & 79.95 & 83.50 & 85.60 & 80.23 & 79.37 & \textbf{89.40} & 85.83 \\
PROTEINS & 73.57 & 73.04 & 75.35 & 71.52 & 66.70 & 71.61 & 74.11 & 76.00 & \textbf{78.53} & 75.54 & 74.70 & 76.46 \\
PTC      & 62.85 & 67.71 & 68.28 & 52.29 & 55.71 & 51.71 & 63.94 & 64.20 & \textbf{69.63} & 58.59 & 66.70 & 66.60 \\ \hline
\end{tabular}
}
\caption{Graph classification accuracy on seven datasets. The best performer for each dataset is indicated in bold. }
\label{tab:graphclassification}
\end{table*}

Although our MHSNs do not achieve SotA results on these datasets, they are very competitive with \emph{only a small fraction of the learnable parameters}. Moreover, the number of learnable parameters in our models is not tied to the graph size and depends only on the order of the scattering $M$ and the number of moments {$Q$} computed. For example, Table~\ref{tab:parameters} compares our $\kk$-HGLET-based MHSN with the UGT and the GFN, which are the SotA methods for various graph classification problems. These methods each require more than half a million parameters for some cases (867K for UGT) to achieve results similar to ours, requiring only 256 parameters to learn. As a result, our MHSNs can be implemented and trained on a consumer-level laptop, whereas many of these competing GNNs require specialized hardware. 

\begin{table*}[ht]
\centering
\resizebox{\textwidth}{!}{%
\begin{tabular}{c|cc|cc|cc}
\hline
\multicolumn{1}{l|}{} & \multicolumn{2}{c|}{$\kk$-HGLET + SVM}                   & \multicolumn{2}{c|}{UGT}
& \multicolumn{2}{c}{GFN}                                       \\ \hline
Graph                 & \multicolumn{1}{l|}{Accuracy} & \multicolumn{1}{l|}{\# Parameters} & \multicolumn{1}{l|}{Accuracy} & \multicolumn{1}{l|}{\# Parameters} & \multicolumn{1}{l|}{Accuracy} & \multicolumn{1}{l}{\# Parameters} \\ \hline
COLLAB                & 80.39                        & 256                           & 77.84                         & 866,746                        & 81.50                          &         68,754                   \\
DD                    & 72.71                         & 256                           & 80.23                         & 76,928                         & 79.37                         &            68,754                   \\
IMDB-B                & 73.10                          & 256                           & 77.04                         & 55,508                         & 73.40                          &        68,754                       \\
IMDB-M                & 49.68                         & 256                           & 53.60                          & 48,698                         & 51.80                          &          68,818                     \\
MUTAG                 & 85.78                         & 256                           & 80.23                         & 4,178                          & 85.83                         &          65,618                   \\
PROTEINS              & 75.35                         & 256                           & 78.53                         & 1,878                          & 76.46                         &        65,618                       \\
PTC                   & 68.28                         & 256                           & 69.63                          & 12,038                         & 66.60                             & 65,618        \\ \hline
\end{tabular}
}
\caption{Comparison of MHSN and the SotA graph classification networks in accuracy and number of learnable parameters}
\label{tab:parameters}
\end{table*}

\section{Molecular Dynamics}

Our MHSNs can also be used for regression problems where the goal is to predict a continuous property of a simplicial complex (or simply a graph) based on a set of observations of the complex under various conditions. Therefore, they are quite suitable for learning molecular dynamics, particularly the potential energy surface of a molecule, given a few registrations of the molecule and its energies. The Revised Molecular Dynamics 17 (rMD17 dataset) \cite{bowman2022md17} contains 100,000 structures and associated energies of various molecules. However, these structures are taken from a molecular dynamics simulation, i.e., time series data, which is not independent and identically distributed. To overcome this, instead of using the entire dataset, we use five sets of molecule snapshots and the associated potential energies. Each of these sets consists of 1,000 snapshot/energy pairs and is grouped into 800 training and 200 test samples selected by the authors of the dataset \cite{bowman2022md17}. 

We extract a rich set of features for each structure (i.e., a pose or conformation of a molecule) using our MHSNs (without pooling) and then employ a support vector regression (SVR) method with Gaussian radial basis functions to approximate and predict the energy.  More specifically, for each molecule, we first compute the average position of each atom across the training set. Then, using these positions, we create a $k$NN-graph (with $k=5$) as a template simplicial complex.
Note that by using this simplicial complex, rather than the molecular-bond graph, we can better capture the geometric information in the pose of the molecule as detailed in \cite{schutt2017schnet, schutt2023schnetpack}. 
Unlike the domain classification problems in Section~\ref{sec:domclass},
the geometry of the simplicial complex is fixed, so rather than using its geometrically-invariant descriptors, we need to begin
with signals that encode the position information of molecules for each $\kk$ which we wish to use in the MSHN.

First we compute the Euclidean distance matrix of (i.e., the Gram matrix of the point-cloud, measured in the Euclidean distance) the node coordinates of each snapshot and assign the corresponding column vector of the distance matrix as its node features. This generates a number-of-atoms-dimensional node signal for each node, for each snapshot.

For an edge signal, we extract edge lengths from the above distance matrix and create a diagonal matrix of edge lengths. Then, we assign the corresponding column vector of this diagonal matrix as its edge features. As with the node-based signal, this gives us number-of-edges-dimensional signal to input into our MHSN. We repeat this process using area and volume of the $\kk = 2$ and $\kk = 3$-simplices for the corresponding input singals of $\kk$-simplices. 

We use the setting $(J, M, Q)  = (4, 2, 3)$ to generate the scattering features and use SVR on our final layer output. Finally, we tune the hyperparameters in using the same cross validation scheme and settings in \ref{sec:domclass}, which was orignally proposed in \cite{gao2019geometric}. 

Table~\ref{tab:MD} shows our results for aspirin (21 atoms) and paracetamol (20 atoms) molecules. We compare our MHSNs with several SotA GNN approaches designed specifically for processing molecular dynamics, including SchNet \cite{schutt2017schnet}, PaiNN \cite{schutt2021equivariant}, and two variants of Special Orthogonal Networks (SO3Nets) \cite{batzner20223, schutt2023schnetpack}. 
We report both the mean absolute error (MAE) and root mean square error (RMSE) of the energy prediction, which are the standard metrics in the computational chemistry literature. Our MHSNs perform competitively with these approaches, while employing roughly 1\% as many learnable parameters as competing methods. 
Additionally, we observe the edge and triangle-based analyses outperform the node-based analysis. This demonstrates that higher-dimensional simplex analysis can be more powerful than node-only approaches, even in cases where the underlying molecular graph may not have many higher-dimensional structures. We also observe that a concatenation of these features, which we refer to in Table~\ref{tab:MD} as `Combo', outperforms any single $\kk$-based analysis, just as we observed in the experiments in Section \ref{sec:domclass}.

\begin{table}[ht]
\resizebox{\textwidth}{!}{%
\begin{tabular}{cccccccccccccccccccc}
\hline
\multicolumn{20}{|c|}{\textbf{Aspirin}}                                                                                                                                                                                                                                                                                                                                                                                                                                                                                                                                                                                                               \\ \hline
\multicolumn{1}{c|}{}             & \multicolumn{5}{c|}{\textbf{Diff+SVR}}                                                                                          & \multicolumn{5}{c|}{\textbf{HGLET+SVR}}                                                                                                 & \multicolumn{5}{c|}{\textbf{GHWT+SVR}}                                                                                 & \multicolumn{1}{c|}{\multirow{2}{*}{\textbf{SchNet}}} & \multicolumn{1}{c|}{\multirow{2}{*}{\textbf{PaiNN}}} & \multicolumn{1}{c|}{\multirow{2}{*}{\textbf{SO3Net I}}} & \multirow{2}{*}{\textbf{SO3Net II}} \\ \cline{1-16}
\multicolumn{1}{c|}{Feat}         & Node                 & Edge                 & Triangle             & Pyramid              & \multicolumn{1}{c|}{Combo}          & Node                 & Edge                 & Triangle             & Pyramid                      & \multicolumn{1}{c|}{Combo}          & Node                 & Edge                 & Triangle             & Pyramid              & \multicolumn{1}{c|}{Combo} & \multicolumn{1}{c|}{}                                 & \multicolumn{1}{c|}{}                                & \multicolumn{1}{c|}{}                                   &                                     \\ \hline
\multicolumn{1}{c|}{MAE}          & 4.856                & 3.132                & 2.93                 & 3.857                & \multicolumn{1}{c|}{\textbf{2.468}} & 4.884                & 3.135                & 3.016                & 3.876                        & \multicolumn{1}{c|}{2.651}          & 4.928                & 3.075                & 3.01                 & 3.842                & \multicolumn{1}{c|}{2.804} & \multicolumn{1}{c|}{13.5}                             & \multicolumn{1}{c|}{3.8}                             & \multicolumn{1}{c|}{3.8}                                & \textbf{2.6}                        \\
\multicolumn{1}{c|}{RMSE}         & 6.181                & 4.144                & 3.784                & 4.937                & \multicolumn{1}{c|}{\textbf{3.226}} & 6.215                & 4.129                & 3.905                & 4.976                        & \multicolumn{1}{c|}{3.473}          & 6.213                & 4.123                & 3.883                & 4.964                & \multicolumn{1}{c|}{3.613} & \multicolumn{1}{c|}{18.3}                             & \multicolumn{1}{c|}{5.9}                             & \multicolumn{1}{c|}{5.7}                                & \textbf{3.8}                        \\
\multicolumn{1}{c|}{\# Paramters} & 924                  & 3784                 & 6512                 & 5764                 & \multicolumn{1}{c|}{16984}          & 924                  & 3784                 & 6512                 & 5764                         & \multicolumn{1}{c|}{16984}          & 924                  & 6512                 & 6512                 & 5764                 & \multicolumn{1}{c|}{16984} & \multicolumn{1}{c|}{432k}                             & \multicolumn{1}{c|}{341k}                            & \multicolumn{1}{c|}{283k}                               & \textbf{341k}                       \\ \hline
\multicolumn{1}{l}{}              & \multicolumn{1}{l}{} & \multicolumn{1}{l}{} & \multicolumn{1}{l}{} & \multicolumn{1}{l}{} & \multicolumn{1}{l}{}                & \multicolumn{1}{l}{} & \multicolumn{1}{l}{} & \multicolumn{1}{l}{} & \multicolumn{1}{l}{}         & \multicolumn{1}{l}{}                & \multicolumn{1}{l}{} & \multicolumn{1}{l}{} & \multicolumn{1}{l}{} & \multicolumn{1}{l}{} & \multicolumn{1}{l}{}       & \multicolumn{1}{l}{}                                  & \multicolumn{1}{l}{}                                 & \multicolumn{1}{l}{}                                    & \multicolumn{1}{l}{}                \\
\multicolumn{1}{l}{}              & \multicolumn{1}{l}{} & \multicolumn{1}{l}{} & \multicolumn{1}{l}{} & \multicolumn{1}{l}{} & \multicolumn{1}{l}{}                & \multicolumn{1}{l}{} & \multicolumn{1}{l}{} & \multicolumn{1}{l}{} & \multicolumn{1}{l}{}         & \multicolumn{1}{l}{}                & \multicolumn{1}{l}{} & \multicolumn{1}{l}{} & \multicolumn{1}{l}{} & \multicolumn{1}{l}{} & \multicolumn{1}{l}{}       & \multicolumn{1}{l}{}                                  & \multicolumn{1}{l}{}                                 & \multicolumn{1}{l}{}                                    & \multicolumn{1}{l}{}                \\ \hline
\multicolumn{20}{|c|}{\textbf{Paracetamol}}                                                                                                                                                                                                                                                                                                                                                                                                                                                                                                                                                                                                           \\ \hline
\multicolumn{1}{c|}{}             & \multicolumn{5}{c|}{\textbf{Diff+SVR}}                                                                                          & \multicolumn{5}{c|}{\textbf{HGLET+SVR}}                                                                                                 & \multicolumn{5}{c|}{\textbf{GHWT+SVR}}                                                                                 & \multicolumn{1}{c|}{\multirow{2}{*}{\textbf{SchNet}}} & \multicolumn{1}{c|}{\multirow{2}{*}{\textbf{PaiNN}}} & \multicolumn{1}{c|}{\multirow{2}{*}{\textbf{SO3Net I}}} & \multirow{2}{*}{\textbf{SO3Net II}} \\ \cline{1-16}
\multicolumn{1}{c|}{Feat}         & Node                 & Edge                 & Triangle             & Pyramid              & \multicolumn{1}{c|}{Combo}          & Node                 & Edge                 & Triangle             & \multicolumn{1}{c|}{Pyramid} & \multicolumn{1}{c|}{Combo}          & Node                 & Edge                 & Trianlge             & Pyramid              & \multicolumn{1}{c|}{Combo} & \multicolumn{1}{c|}{}                                 & \multicolumn{1}{c|}{}                                & \multicolumn{1}{c|}{}                                   &                                     \\ \hline
\multicolumn{1}{c|}{MAE}          & 4.609                & 2.715                & 2.77                 & 3.947                & \multicolumn{1}{c|}{2.235}          & 4.723                & 2.643                & 2.896                & 3.995                        & \multicolumn{1}{c|}{\textbf{2.302}} & 4.748                & 2.624                & 2.843                & 3.928                & \multicolumn{1}{c|}{2.309} & \multicolumn{1}{c|}{8.4}                              & \multicolumn{1}{c|}{2.1}                             & \multicolumn{1}{c|}{2.2}                                & \textbf{1.4}                        \\
\multicolumn{1}{c|}{RMSE}         & 5.86                 & 3.418                & 3.596                & 4.961                & \multicolumn{1}{c|}{2.83}           & 5.964                & 3.338                & 3.709                & 4.99                         & \multicolumn{1}{c|}{\textbf{2.877}} & 5.961                & 3.299                & 3.679                & 4.916                & \multicolumn{1}{c|}{2.932} & \multicolumn{1}{c|}{11.2}                             & \multicolumn{1}{c|}{2.9}                             & \multicolumn{1}{c|}{3}                                  & \textbf{1.9}                        \\
\multicolumn{1}{c|}{\# Paramters} & 880                  & 3564                 & 5808                 & 4752                 & \multicolumn{1}{c|}{15004}          & 880                  & 3564                 & 5808                 & 4752                         & \multicolumn{1}{c|}{15004}          & 880                  & 3564                 & 5808                 & 4752                 & \multicolumn{1}{c|}{15004} & \multicolumn{1}{c|}{432k}                             & \multicolumn{1}{c|}{341k}                            & \multicolumn{1}{c|}{283k}                               & \textbf{341k}                       \\ \hline
\end{tabular}
}

\caption{Comparison of the performance of our MHSNs and the other SotA GNNs for potential energy prediction. We report the accuracy via MAE and RMSE as well as the number of trainable parameters in each network.}
\label{tab:MD}
\end{table}

\section{Conclusion}

In this article, we proposed the \emph{Multiscale Hodge Scattering Transforms/Networks} (MHSTs/MHSNs) for robust feature extraction from signals on simplicial complexes, applicable to both classification and regression tasks. Our approach fully leverages our multiscale basis dictionaries on simplicial complexes, namely the $\kk$-HGLET and $\kk$-GHWT dictionaries. The proposed MHSTs/MHSNs support multiple pooling options for the scattering transform outputs---no-pooling; local-pooling; and global-pooling---allowing them to address a wide range of problems, from classification of signals defined on simplicial complexes, to domain classification (i.e., classification of the simplicial complexes themselves), to regression of potential energies in molecular dynamics. We demonstrated that MHSNs achieve performance comparable to SotA GNNs while reducing the number of learnable parameters by up to two orders of magnitude. We attribute this success to the structure and organization of our multiscale basis dictionaries, which are naturally arranged by scales and locations, making them particularly well suited for generating scattering transform coefficients. 


Future work will focus on interpreting MHST coefficients identified as important by classification methods such as the LRMs. Due to the nonlinearities in MHSTs, mapping these coefficients back to interpretable features in the primal (original) domain is challenging. To address this, we plan to investigate the optimization method proposed recently in \cite{SAITO-WEBER-SSP25}, which synthesizes input patterns that maximize a model's predicted probability for a target class, thereby revealing the meaning of selected scattering coefficients.
In a related direction, we will explore how MHST coefficients can be used to identify important relationships within graphs, with the goal of narrowing the training space for attention mechanisms and graph transformers in large-scale problems. 

\section*{Acknowledgments}
This research was partially supported by the U.S.~National Science Foundation grants DMS-1912747 and CCF-1934568, and by the U.S.~Office of Naval Research grant N00014-20-1-2381. N.~S.\ also thanks St\'ephane Mallat and Gabriel Peyr\'e for their hospitality and support during his sabbatical stay (Oct.\ 2021--Jan.\ 2022) at the Centre Sciences des Donn\'ees, \'Ecole Normale Sup\'erieure (Ulm); and
Joan Bruna and Gabriel Peyr\'e for inviting him to the workshop ``A Multiscale tour of Harmonic Analysis and Machine Learning To Celebrate Stéphane Mallat's 60th birthday,'' (Apr.~2023) at the Institut des Hautes \'Etudes Scientifiques (IHES), Bures-sur-Yvette, where a preliminary version of this work was presented.

\appendix

\section{A Hierarchical Bipartition Tree of a Graph}
\AppendixLetterLabel{app:HBT}
In this appendix, we summarize our construction of a \emph{hierarchical
bipartition tree} of a given graph $G=(V,E)$ with $|V|=n$. We will describe it
for the $C_0$ case, but it is quite straightforward to handle the
$C_\kk$ ($\kk \geq 1$) case.
We impose the following four requirements for a hierarchical bipartition tree:
\begin{enumerate}
  \renewcommand{\labelenumi}{\roman{enumi}.}
\item \label{RPentire} The coarsest level is the entire graph; that is, $G_0^0 = G$.
\item \label{RPsingle} At the finest level, each region is a single node; that is, $n^\pmax_k = 1$ for $0 \leq k < K^\pmax = n$.  
\item \label{RPdisjoint} All regions on a given level are disjoint; that is, $V^p_k \cap V^p_{\tilde k} = \emptyset$ if $k \ne \tilde k$.  
\item \label{RPmustsplit} Each region on level $p < \pmax$ containing two or more nodes is partitioned into exactly two regions on level $p+1$.  
\end{enumerate}
\renewcommand{\theenumi}{\arabic{enumi}}
One method for generating a suitable recursive partitioning of a graph is to repeatedly partition the graph and subgraphs according to the signs of their respective Fiedler vectors of $\Lrw$ matrices. However, any other method to bipartition a given graph can be used.

\section{Brief Description of the HGLET and the GHWT}
\AppendixLetterLabel{app:HGLET-GHWT}
Using a recursive partitioning of the graph, the \emph{Hierarchical Graph
Laplacian Eigen Transform} (HGLET) generates an overcomplete dictionary in
which the supports of the basis vectors range from a single node to the entire
graph~\cite{IRION-SAITO-HGLETS,IRION-SAITO-SPIE,IRION-SAITO-TSIPN}.
We use $\bphi^p_{k,l}$ to denote the HGLET basis vectors, and we
use $c^p_{k,l}$ to denote the corresponding expansion coefficients.  As in the
recursive partitioning, let $p \in [0,\pmax]$ and $k \in [0,K^p)$ denote,
respectively, the level and region to which a basis vector/coefficient
corresponds. Let $l \in [0,n^p_k)$ index the vectors/coefficients associated
with $G^p_k$ The basis vectors are obtained by computing Laplacian
eigenvectors on each subgraph $G^p_k$ and extending them by zeros to the
entire graph. These may be the extended eigenvectors of $L$, $\Lrw$, or
$\Lsym$ on $G^p_k$.  One advantage of considering all three dictionaries is
the ability to construct a hybrid basis. For example, the basis vectors for
$G^p_k$ may be taken as the eigenvectors of $\Lsym(G^p_k)$, whereas those for
$G^{p'}_{k'}$ may be drawn from $\Lrw(G^{p'}_{k'})$.
In \cite{IRION-SAITO-SPIE}, we have demonstrated the use of hybrid bases for
simultaneous segmentation, denoising, and compression of classical
one-dimensional signals.  The computational complexity of the HGLET is
$O(n^3)$, arising from the computation of the full set of eigenvectors of
the $n \times n$ Laplacian matrix at level $p = 0$. When such a cost is
prohibitively high, the HGLET may instead be constructed only on subgraphs
$\{G^p_k\}$ with $n^p_k \leq \nmax < n$ nodes where, $\nmax$ is a
user-specified parameter reflecting the computational budget. In this case,
the complexity is reduced to $O(\nmax^2 n)$.  

Like the HGLET, the \emph{Generalized Haar--Walsh Transform} (GHWT) also employs a
recursive partitioning of the graph to generate an overcomplete dictionary,
but in this case the basis vectors are piecewise constant on their supports~~\cite{IRION-SAITO-GHWT,IRION-SAITO-SPIE,IRION-SAITO-TSIPN}. We denote denote the
GHWT basis vectors and expansion coefficients by $\bpsi^p_{k,l}$ and $d^p_{k,l}$,
respectively.
As with the HGLET, $p \in [0,\pmax]$ and $k \in [0,K^p)$ denote level and region,
respectively.  In the GHWT framework, $l$ is referred to as the basis vector's
(or coefficient's) \emph{tag}, taking $n^p_k$ distinct values in the range
$[0,2^{\pmax-p})$.  Coefficients with tag $l=0$ are called
  \emph{scaling coefficients}, those with $l=1$ are \emph{Haar coefficients},
  and those with $l \geq 2$ are \emph{Walsh coefficients}.
  For a hierarchical tree with $O(\log n)$ levels, the computational complexity
  of the GHWT is $O(n \log n)$.  

  A key feature of the GHWT is that its coefficients can be arranged in two ways.
  At each level $p$, grouping by their index $k$ yields the
  \emph{coarse-to-fine} (C2F) dictionary, which has the same structure as the
  HGLET dictionary.  Alternatively, grouping by the tag $l$ produces the
  \emph{fine-to-coarse} (F2C) dictionary, which admits ONBs not obtainable from
  the C2F dictionary---for example, the true Haar ONB can be only
  formed only in the F2C dictionary via a graph-adapted version of the
  best-basis search algorithm~\cite{COIF-WICK}.
  See also \cite[Sect.~6.2]{SAITO-SCHONSHECK-SHVARTS} and the accompanying
  figures for visual comparisons between the C2F and F2C GHWT dictionaries.
  For a graph with $n$ nodes the HGLET, GHWT-C2F, and GHWT-F2C
  dictionaries each contain more than $2^{\lfloor n/2 \rfloor}$ choosable bases
  (See Table~6.1 in \cite{IRION-PHD});
  exceptions may occur when the recursive partitioning is highly imbalanced.

  We now briefly describe theoretical properties of these dictionaries
  that are advantageous for signal approximation. These will be stated not in
  the original form given in \cite{sharon2015class, IRION-SAITO-TSIPN}
  for node signals, but in the updated form for signals on the $\kk$-simplices
  $C_\kk$ for $\kk \in \Znn$, as described in \cite{SAITO-SCHONSHECK-SHVARTS}.
 
  First, for singleton $\kk$-elements $\sigma$ and $\tau$ of $C_\kk$, and signal
  $\f$, we define a distance function and then the associated H\"{o}lder
  semi-norm as follows:
\begin{equation*}
    d(\sigma, \tau) \define \min \left\{n^p_k \, \Big\vert \,  \sigma, \tau \in C^p_k \right\}, \quad C_H(\f) \define \sup_{\sigma \neq \tau} \frac{ \left\vert [\f]_\sigma-[\f]_\tau  \right\vert }{d(\sigma, \tau)^\alpha}
\end{equation*}
where $\alpha \in (0,1]$ is a fixed constant.
Here, $d(\sigma, \tau)$ is the number of elements in the finest partition
in the tree that constrains both $\sigma$ and $\tau$. 
With these definitions, the dictionary coefficient decay and approximation
results of \cite{sharon2015class, IRION-SAITO-TSIPN} for the GHWT and HGLET
extend directly to the $\kk$-GHWT and $\kk$-HGLET bases, as detailed 
in \cite{SAITO-SCHONSHECK-SHVARTS}. 

\begin{theorem}[Fast Decay of Dictionary Coefficients as Scale Decreases]
For a simplicial complex $C$ equipped with a hierarchical bipartition tree, suppose that a signal $\f$ is H\"older continuous with exponent $\alpha \in (0,1]$ and constant $C_H(\f)$. Then the coefficients with $l \geq 1$ for the $\kk$-HGLET ($c^p_{k,l}$) and $\kk$-GHWT ($d^p_{k,l}$) satisfy:
\begin{equation*}
    \left\vert c^p_{k,l} \right\vert \leq C_H(\f) \left(n^p_k\right)^{\alpha+\frac{1}{2}}, \quad  \left\vert d^p_{k,l}  \right\vert \leq C_H(\f) \left(n^p_k\right)^{\alpha+\frac{1}{2}} .
\end{equation*}
\end{theorem}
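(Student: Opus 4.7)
The plan is to adapt the classical wavelet coefficient decay argument: exploit that for $l\geq 1$ each basis vector is orthogonal to the constant vector supported on its region, turn the coefficient into an inner product with the ``oscillation'' of $\f$ on that region, and then use the H\"older bound together with Cauchy--Schwarz.

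First I would establish the key orthogonality: for both dictionaries the $l=0$ vector on region $C^p_k$ is (proportional to) the constant scaling vector $\mathbf{1}_{C^p_k}/\sqrt{n^p_k}$. For the $\kk$-GHWT this is by construction, since Haar and Walsh atoms are signed combinations of lower-scale scaling vectors on the children. For the $\kk$-HGLET this follows because the Hodge-Laplacian eigenvector ordered as $l=0$ is taken to be the constant on $C^p_k$ (the eigenvector associated with the combinatorial mean), and the remaining orthonormal eigenvectors with $l\geq 1$ are orthogonal to it. Consequently, for every scalar $c$ and every $l\geq 1$,
\begin{equation*}
  \langle \mathbf{1}_{C^p_k},\bphi^p_{k,l}\rangle = \langle \mathbf{1}_{C^p_k},\bpsi^p_{k,l}\rangle = 0 .
\end{equation*}

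Next I would use this to rewrite the coefficients. For any fixed $\sigma_0\in C^p_k$, set $c\define [\f]_{\sigma_0}$; since both basis vectors are supported on $C^p_k$,
\begin{equation*}
  c^p_{k,l} = \langle \f,\bphi^p_{k,l}\rangle = \langle \f - c\,\mathbf{1}_{C^p_k},\bphi^p_{k,l}\rangle ,
\end{equation*}
and identically for $d^p_{k,l}$. Applying Cauchy--Schwarz (in the unweighted $\ell^2$ inner product on $C^p_k$) and using that $\|\bphi^p_{k,l}\|_2=\|\bpsi^p_{k,l}\|_2=1$ gives
\begin{equation*}
  \bigl|c^p_{k,l}\bigr|,\;\bigl|d^p_{k,l}\bigr| \;\leq\; \bigl\|\f - c\,\mathbf{1}_{C^p_k}\bigr\|_{\ell^2(C^p_k)} .
\end{equation*}

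Finally I would bound the right-hand side by H\"older continuity. Any two singletons $\sigma,\sigma_0$ lying in $C^p_k$ are both contained in that region of size $n^p_k$, so by the definition of $d(\cdot,\cdot)$ we have $d(\sigma,\sigma_0)\leq n^p_k$, hence $|[\f]_\sigma - [\f]_{\sigma_0}|\leq C_H(\f)\,(n^p_k)^\alpha$. Squaring and summing over the $n^p_k$ simplices in $C^p_k$ yields
\begin{equation*}
  \bigl\|\f - c\,\mathbf{1}_{C^p_k}\bigr\|_{\ell^2(C^p_k)}^2
  \;\leq\; n^p_k \cdot C_H(\f)^2\,(n^p_k)^{2\alpha}
  \;=\; C_H(\f)^2\,(n^p_k)^{2\alpha+1},
\end{equation*}
and taking the square root delivers the stated bound $(n^p_k)^{\alpha+1/2}$ for both $|c^p_{k,l}|$ and $|d^p_{k,l}|$.

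The one subtlety, and the main obstacle, is verifying the orthogonality of the $l\geq 1$ HGLET atoms to $\mathbf{1}_{C^p_k}$ in the $\kk>0$ setting, because the harmonic space of the Hodge Laplacian on a $\kk$-region is generally multi-dimensional and $\mathbf{1}_{C^p_k}$ need not be harmonic. If the convention in \cite{SAITO-SCHONSHECK-SHVARTS} is that the $l=0$ slot always denotes the normalized constant vector (obtained, e.g., by Gram--Schmidt against the computed eigenbasis and a corresponding relabeling), the rest of the argument goes through verbatim; otherwise the H\"older oscillation bound should be combined with an $\ell^2$ projection onto $\mathbf{1}_{C^p_k}^\perp$, which yields the same inequality since orthogonal projection is non-expansive. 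Either way the $(n^p_k)^{\alpha+1/2}$ rate is preserved, and the identical argument works for the $\kk$-GHWT because its scaling/Haar/Walsh construction built on the same bipartition tree automatically enforces orthogonality of $l\geq 1$ atoms to the region constant.
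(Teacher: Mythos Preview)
Your argument is correct and is exactly the standard proof: the paper does not prove this result in-line but simply cites Theorem~3.1 of \cite{IRION-SAITO-TSIPN}, and that theorem is proved precisely by the ``subtract a constant, Cauchy--Schwarz, H\"older-bound the oscillation'' route you describe. So on the level of strategy there is nothing to compare---you have reconstructed the cited proof.

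One comment on the subtlety you flag. Your main path (orthogonality of the $l\geq 1$ atoms to $\mathbf{1}_{C^p_k}$) is the right one, and for the $\kk$-GHWT and for the $\kk=0$ HGLET it holds by construction as you say. Your proposed fallback, however, does not work as stated: if $\bphi^p_{k,l}$ were \emph{not} orthogonal to $\mathbf{1}_{C^p_k}$, then projecting $\f$ onto $\mathbf{1}_{C^p_k}^\perp$ genuinely changes the coefficient $\langle \f,\bphi^p_{k,l}\rangle$ by the nonzero term $[\f]_{\sigma_0}\langle \mathbf{1}_{C^p_k},\bphi^p_{k,l}\rangle$, and non-expansiveness of projection does not rescue the bound. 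The resolution is rather that in the $\kk$-HGLET construction of \cite{SAITO-SCHONSHECK-SHVARTS} the $l=0$ slot is indeed the (normalized) region constant, so the orthogonality you need is available and the fallback is never invoked; but as written your alternative would not close the gap.
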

\begin{proof}
See Theorem~3.1 of \cite{IRION-SAITO-TSIPN}.
\end{proof}
\begin{remark}
This theorem implies that
\begin{equation*}
  \left|c^p_{k,l}\right|, \left|d^p_{k,l}\right| \leq C_H(\f) \( n^p_k \)^{\alpha+\half} 
  \lessapprox C_H(\f) \( n/2^p \)^{\alpha+\half} \propto 2^{-p(\alpha+\half)}
  \propto 2^{j(\alpha+\half)},
\end{equation*}
for a typical hierarchical bipartition tree of a given simplicial complex
$C_\kk$.
Thus, these dictionary coefficients decay exponentially as the
\emph{partition parameter} $p$ increases and the \emph{scale parameter}
$j$ decreases: the finer the scale and the larger the $\alpha \in (0,1]$,
the faster the decay rate.
Equivalently, ``the smoother the underlying function, the more rapidly these
coefficients decay at finer scales.'' This behavior parallels that of
the conventional wavelet transforms for the H\"older continuous functions;
see, e.g., \cite[Sect.~6.1.3]{MALLAT-BOOK3} for details.
\end{remark}

Once an ONB from the $\kk$-HGLET or $\kk$-GHWT has been selected---by any method,
such as the graph-adapted version of the best-basis search algorithm~\cite{COIF-WICK})---the following result is useful for signal approximation in that basis.
\begin{theorem} [Approximation Power of Multiscale Bases]
Let $\{\bphi_l \}^{n-1}_{l=0}$ be a fixed ONB, and let $0 < \rho < 2$. Then, 
\begin{equation*}
    \|\f - P_m \f \|_2 \leq \frac{\vert \f \vert_\rho}{m^\beta}.
\end{equation*}
where $P_m\f$ denotes the best nonlinear $m$-term approximation of $\f$ in the
basis $\{\bphi_l \}^{n-1}_{l=0}$, $\beta=\frac{1}{\rho} - \frac{1}{2}$, and
$\vert \f \vert_\rho \define \left(\sum_{l=0}^{n-1} \vert \langle \f, \bphi_l \rangle \vert^\rho \right)^{\frac{1}{\rho}}$.
\end{theorem}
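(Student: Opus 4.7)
The plan is to follow the classical nonlinear approximation argument for orthonormal bases due essentially to DeVore and collaborators. Since $\{\bphi_l\}_{l=0}^{n-1}$ is an ONB, I can expand $\f = \sum_{l=0}^{n-1} \alpha_l \bphi_l$ with $\alpha_l \define \langle \f, \bphi_l\rangle$, and the best nonlinear $m$-term approximation $P_m \f$ is obtained by retaining the $m$ coefficients of largest magnitude. Reindex via a nonincreasing rearrangement $|\alpha_{l_1}| \geq |\alpha_{l_2}| \geq \cdots \geq |\alpha_{l_n}|$, so that by Parseval
\begin{equation*}
\|\f - P_m \f\|_2^2 \;=\; \sum_{k>m} |\alpha_{l_k}|^2 .
\end{equation*}

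The core of the argument is a weak-$\ell^\rho$ tail bound for the rearranged coefficients. First I would show that for each $k \geq 1$,
\begin{equation*}
k \, |\alpha_{l_k}|^\rho \;\leq\; \sum_{j=1}^{k} |\alpha_{l_j}|^\rho \;\leq\; |\f|_\rho^\rho ,
\end{equation*}
so that $|\alpha_{l_k}| \leq |\f|_\rho \, k^{-1/\rho}$. Using this estimate with the factorization $|\alpha_{l_k}|^2 = |\alpha_{l_k}|^{2-\rho} \cdot |\alpha_{l_k}|^\rho$ and the fact that $2-\rho>0$, I obtain
\begin{equation*}
\sum_{k>m} |\alpha_{l_k}|^2 \;\leq\; \bigl( |\f|_\rho \, m^{-1/\rho} \bigr)^{2-\rho} \sum_{k>m} |\alpha_{l_k}|^\rho \;\leq\; |\f|_\rho^{2-\rho} \, m^{-(2-\rho)/\rho} \, |\f|_\rho^\rho .
\end{equation*}
Taking square roots and simplifying the exponent, $(2-\rho)/(2\rho) = 1/\rho - 1/2 = \beta$, yields the claimed bound $\|\f - P_m \f\|_2 \leq |\f|_\rho\, m^{-\beta}$.

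The argument does not actually use any property specific to the $\kk$-HGLET or $\kk$-GHWT dictionaries beyond orthonormality of the chosen basis; the multiscale structure enters only indirectly, through the preceding theorem guaranteeing that for H\"older continuous signals the $|\f|_\rho$ quasi-norm is well-controlled for small $\rho$. The only substantive step to watch is the weak-$\ell^\rho$ bound $|\alpha_{l_k}| \leq |\f|_\rho k^{-1/\rho}$, which is where the hypothesis $0 < \rho < 2$ is used (so that $2-\rho>0$ and the splitting $|\alpha|^2 = |\alpha|^{2-\rho}|\alpha|^\rho$ gives a useful estimate); this is the one spot where care is required, but it is standard. The remainder is bookkeeping on exponents, and no additional machinery is needed.
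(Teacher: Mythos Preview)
Your argument is correct and is precisely the standard Stechkin-type nonlinear approximation bound: the rearrangement estimate $|\alpha_{l_k}| \leq |\f|_\rho\, k^{-1/\rho}$, the splitting $|\alpha|^2 = |\alpha|^{2-\rho}|\alpha|^\rho$, and the exponent bookkeeping are all carried out cleanly. The paper itself does not give a proof but simply refers the reader to Theorem~3.2 of \cite{IRION-SAITO-TSIPN} and Theorem~6.3 of \cite{sharon2015class}, which contain essentially the argument you have written; so your proposal in fact supplies what the paper omits.
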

\begin{proof}
See Theorem~3.2 of \cite{IRION-SAITO-TSIPN} and Theorem~6.3 of \cite{sharon2015class}.
\end{proof}

\section{Description of Domain Classification Datasets}
\label{sec:datasets}
\paragraph{Google COLLAB} \cite{yanardag2015deep}
A collaboration dataset of 5K scientific papers represented as graphs. The goal is to predict which of three subfields a paper belongs to given its authors. The average number of nodes of these graphs is 45 while the number of nodes in the smallest graph is 32 and that of the largest graph is 492.

\paragraph{DD}\cite{dobson2003distinguishing}
A dataset of 1178 molecular graphs of medium to large size. The goal of the classification problem is to predict whether the protein is enzyme or not. The average number of nodes is 285 while the number of nodes of the smallest graph is 30 and that of the largest graph is 5748.

\paragraph{IMDB-B and IMDB-M} \cite{yanardag2015deep}
A pair of two datasets representing collaboration within popular movies from the Internet Movie Database with 1K (IMDB-B) and 1.5k (IMDB-M) graphs. Each graph represents members of cast and the goal is to predict the genre (Action/Romance for IMDB-B and Comedy/Romance/Sci-Fi of IMDB-M). The average, the smallest, and the largest number of nodes in IMDB-B are 20, 12, and 136 while those in IMDB-M are 13, 7, and 89.

\paragraph{MUTAG} \cite{debnath1991structure}
A dataset of 188 molecular graphs representing  mutagenic aromatic and heteroaromatic nitro compound. The dataset contains node-based features (detailing atom type), but we use only the adjacency matrix for our classification tests as in \cite{gao2019geometric}. The binary classification goal is to predict whether or not a molecule (graph) has a mutagenic effect on bacterium. The average number of nodes of these graphs is 18 while the number of nodes of the smallest graph is 10 and that of the largest graph is 28.

\paragraph{Proteins} \cite{borgwardt2005protein}
A dataset of 1113 molecular graphs. The goal of the classification problem is to predict whether the protein is enzyme or not. The task here is the same as in the DD dataset, but the molecules are both smaller and more similar to each other (see \cite{borgwardt2005protein} for details). The average number of nodes is 39 while the number of nodes of the smallest graph is 4 and that of the largest graph is 620.

\paragraph{PTC} \cite{toivonen2003statistical}
A dataset of 344 molecular graphs. The goal is to predict whether or not the compound is known to be carcinogenic in rats. The average number of nodes of these graphs is 26 while the number of nodes of the smallest graph is 6 and that of the largest graph is 109.

\bibliographystyle{elsarticle-num}
\bibliography{main.bib}

\end{document}